\newtheorem{lemma}{Lemma}[section]
\newcommand{\E}{\mathrm{E}}
\newcommand{\bbeta}{\boldsymbol\beta}
\newcommand{\bz}{\bm{z}}
\newcommand{\mathcalX}{\mathcal{X}}
\newcommand{\bpi}{\bm{\pi}}
\newcommand{\bx}{\mathbf{x}}
\newcommand{\bxi}{\mathbf{x}_i}
\newcommand{\xknoi}{\mathcal{X}_{k,-i}}
\newcommand{\bznoi}{\mathbf{z}_{-i}}
\newcommand{\niw}{\mathrm{NIW}}
\newcommand{\bmu}{\boldsymbol\mu}
\newcommand{\bsigma}{\boldsymbol\Sigma}
\newcommand{\multinomial}{\mathrm{Multinomial}}
\newcommand{\dirichlet}{\mathrm{Dirichlet}}
\icmltitlerunning{Reducing over-clustering via the powered Chinese restaurant process}
\begin{document}

\twocolumn[
\icmltitle{Reducing over-clustering via the powered Chinese restaurant process}




\begin{icmlauthorlist}
\icmlauthor{Jun Lu}{}
\icmlauthor{Meng Li}{rice}
\icmlauthor{David Dunson}{duke}
\end{icmlauthorlist}

\icmlaffiliation{rice}{Department of Statistics, Rice University, Houston, TX, USA}
\icmlaffiliation{duke}{Department of Statistical Science, Duke University, Durham, NC, USA}

\icmlcorrespondingauthor{Jun Lu}{jun.lu.locky@gmail.com}

\icmlkeywords{Machine Learning, ICML}

\vskip 0.3in
]



\printAffiliationsAndNotice{}  

\begin{abstract}
Dirichlet process mixture (DPM) models tend to produce many small clusters regardless of whether they are needed to accurately characterize the data - this is particularly true for large data sets.
However, interpretability, parsimony, data storage and communication costs all are hampered by having overly many clusters.  We propose a powered Chinese restaurant process to limit this kind of problem and penalize over clustering. The method is illustrated using some simulation examples and data with large and small sample size including MNIST and the Old Faithful Geyser data.
\end{abstract}
\section{Introduction}
\noindent Dirichlet process mixture (DPM) models and closely related formulations have been very widely used for flexible modeling of data and for clustering.  DPMs of Gaussians have been shown to possess frequentist optimality properties in density estimation, obtaining minimax adaptive rates of posterior concentration with respect to the true unknown smoothness of the density \cite{shen2011adaptive}.  DPMs are also very widely used for probabilistic clustering of data.  In the clustering context, it is well known the DPMs favor introducing new components at a log rate as the sample size increases, and tend to produce some large clusters along with many small clusters.  As the sample size $N$ increases, these small clusters can be introduced as an artifact even if they are not needed to characterize the true data generating process; for example, even if the true model has finitely many clusters, the DPM will continue to introduce new clusters as $N$ increases \cite{miller2013simple,argiento2009comparison,lartillot2004bayesian,
onogi2011characterization,miller2014inconsistency}.

Continuing to introduce new clusters as $N$ increases can be argued to be an appealing property. The number of `types' of individuals is unlikely to be finite in an infinitely large population, and there is always a chance of discovering new types as new samples are collected.  This rationale has motivated a rich literature on generalizations of Dirichlet processes, which have more flexibility in terms of the rate of introduction of new clusters.  For example, the two parameter Poisson-Dirichlet process (a.k.a., the Pitman-Yor process) is a generalization that instead induces a power law rate, which is more consistent with many observed data processes  \cite{perman1992size}. There has also been consideration of a rich class of Gibbs-type processes, which considerably generalize Pitman-Yor to a broad class of so-called exchangeable partition probability functions (EPPFs) \cite{gnedin2006exchangeable,lijoi2010models,de2015gibbs,
bacallado2017sufficientness,favaro2013conditional}.  Much of the emphasis in the Gibbs-type process literature has been on data in which `species' are observed directly, and the goal is predicting the number of new species in a further sample \cite{lijoi2007bayesian}. It remains unclear whether such elaborate generalizations of Dirichlet processes have desirable behavior when clusters/species are latent variables in a mixture model.

The emphasis of this article is on addressing practical problems that arise in implementing DPMs and generalizations when sample sizes and data dimensionality are moderate too large.  In such settings, it is common knowledge that the number of clusters can be too large, leading to a lack of interpretability, computational problems and other issues.  For these reasons, it is well motivated to develop {\em sparser} clustering methods that do not restrict the number of clusters to be finite {\em a priori} but instead favor deletion of small clusters that may not be needed to accurately characterize the true data generating mechanism.  With this goal in mind, we find that the usual focus on exchangeable models, and in particular EPPFs, can limit practical performance.  There has been some previous work on non-exchangeable clustering methods motivated by incorporation of predictor-dependence in clustering \cite{blei2011distance,ghosh2011spatial,soumya2014nonparametric,socher2011spectral}, but our focus is instead on providing a simple approach that tends to delete small and unnecessary clusters produced by a DPM.  Marginalizing out the random measure in the DPM specification produces a Chinese restaurant process (CRP).  We propose a simple powered modification to the CRP, which has the desired impact on clustering and develop associated inference methods.

\section{Powered Chinese restaurant process (pCRP)}
\subsection{The Chinese restaurant process (CRP)}
The Chinese restaurant process is a simple stochastic process that is exchangeable. In the analogy from which this process takes its name, customers seat themselves at a restaurant with an infinite number of tables. Each customer sits at a previously occupied table with probability proportional to the number of customers already sitting there, and at a new table with probability proportional to a concentration parameter $\alpha$. For example, the first customer enters and sits at the first table. The second customer enters and sits at the first table with probability $\frac{1}{1+\alpha}$ and at a new table with probability $\frac{\alpha}{1+\alpha}$. The $i^{th}$ customer sits at an occupied table with probability proportional to the number of customers already seated at that table, or sits at a new table with a probability proportional to $\alpha$. Formally, if $z_i$ is the table chosen by the $i^{th}$ customer, then
\begin{equation}
\begin{aligned}
&\hspace*{-0.04in} p(z_i = k | \bznoi, \alpha)=\\
& \hspace*{-0.08in}  \left\{
                \begin{array}{ll}
                  \frac{N_{k,-i}}{N+\alpha-1},  \text{ if \textit{k} is occupied, i.e. }  N_k > 0, \\
                  \frac{\alpha}{N+\alpha-1}, \text{ if \textit{k} is a new table, i.e. }  k = k^{\star} = K + 1, \\
                \end{array}
              \right.
\label{equation:crp_equation}
\end{aligned}
\end{equation}
where $\bznoi = (z_1, z_2, \ldots , z_{i-1}, z_{i+1},  \ldots, z_N)$ and $N_{k,-i}$ is the number of customers seated at table $k$ excluding customer $i$.
From the definition above, we can observe that the CRP is defined by a rich-get-richer property in which the probability of being allocated to a table increases in proportion to the number of customers already at that table.

In a CRP mixture model, each table is assigned a specific parameter in a kernel generating data at the observation level.  Customers assigned to a specific table are given the cluster index corresponding to that table, and have their data generated from the kernel with appropriate cluster/table-specific parameters.  The CRP provides a prior probability model on the clustering process, and this prior can be updated with the observed data to obtain a posterior over the cluster allocations for each observation in a data set.  The CRP provides an exchangeable prior on the partition of indices $\{1,\ldots,N\}$ into clusters; exchangeability means that the ordering of the indices has no impact on the probability of a particular configuration -- only the number of clusters $K_N$ and the size of each cluster can play a role.  The CRP implies that $\E[K_N|\alpha] = O(\alpha \log N)$~\cite{teh2011dirichlet}.

\subsection{Powered Chinese restaurant process}

Popular Bayesian nonparametric priors, such as the Dirichlet process \cite{ferguson1973bayesian,blackwell1973ferguson,antoniak1974mixtures}, Chinese restaurant process, Pitman-Yor process \cite{perman1992size,pitman1997two} and Indian buffet process \cite{griffiths2005infinite,thibaux2007hierarchical}, assume infinite exchangeability.  In particular, suppose we have a clustering process for an infinite sequence of data points $i=1,2,3,\ldots,\infty$.  This clustering process will induce a partition of the integers $\{1,\ldots,N\}$ into $K_N$ clusters of size $N_1,\ldots,N_{K_N}$, for $N=1,2,\ldots,\infty$. For an exchangeable clustering process, the probability of a particular partition of $\{1,\ldots,N\}$ only depends on $N_1,\ldots,N_{K_N}$ and $K_N$, and does not depend on the order of the indices $\{1,\ldots,N\}$.  In addition, the probability distributions for different choices of $N$ are {\em coherent};  the probability distribution of partitions of $\{1,\ldots,N\}$ can be obtained from the probability distribution of partitions of $\{1,\ldots,N+1\}$ by marginalizing out the cluster assignment for data point $i=N+1$.  These properties are often highly appealing computationally and theoretically, but it is nonetheless useful to consider processes that violate the infinite exchangeability assumption.  This can occur when the addition of a new data point $i=N+1$ to a sample of $N$ data points can impact the clustering of the original $N$ data points.  For example, we may re-evaluate whether data point $1$ and $2$ are clustered together in light of new information provided by a third data point, a type of {\em feedback} property.  

We propose a new powered Chinese restaurant process (pCRP), which is designed to favor elimination of artifactual small clusters produced by the usual CRP by implicit incorporation of a feedback property violating the usual exchangeability assumptions.  The proposed pCRP makes the random seating assignment of the customers depend on the powered number of customers at each table (i.e. raise the number of each table to power $r$). Formally, we have
\begin{equation}
\begin{aligned}
&p(z_i = k | \bznoi, \alpha)=\\
&\left\{
                \begin{array}{ll}
                  \frac{N_{k,-i}^r}{\sum_h^K N_{h,-i}^r+\alpha}, \text{ if \textit{k} is occupied, i.e. }  N_k > 0,\\
                  \frac{\alpha}{\sum_h^K N_{h,-i}^r+\alpha}, \text{ if \textit{k} is a new table, i.e. }  k = k^{\star}=K+1,
                \end{array}
              \right.
\label{equation:powered_crp_equation}
\end{aligned}
\end{equation}
where $r>1$ and $N_{k,-i}$ is the number of customers seated at table $k$ excluding customer $i$. More generally, one may consider a $g$-CRP to generalize the CRP such that 
\begin{equation}
\begin{aligned}
&p(z_i = k | \bznoi, \alpha)=\\
&\left\{
\begin{array}{ll}
\frac{g(N_{k,-i})}{\sum_h^K g(N_{h,-i})+\alpha}, \text{ if \textit{k} is occupied, i.e. }  N_k > 0,\\
\frac{\alpha}{\sum_h^K g(N_{h,-i})+\alpha}, \text{ if \textit{k} is a new table, i.e. }  k = k^{\star}=K+1,
\end{array}
\right.
\label{equation:g_crp_equation}
\end{aligned}
\end{equation}
where $g(\cdot): \mathbb{R^+} \rightarrow \mathbb{R^+} $ is an increasing function and $g(0) = 0$. We achieve shrinkage of small clusters via a rich-get-(more)-richer property by requiring $g(x) \geq x$ for $x > 1$ to`enlarge'  clusters containing more than one element. We require the $g$-CRP to maintain a \textit{proportional invariance} property:
\begin{equation}\label{eq:proportional.invariant}
\frac{g(cN_1)}{g(cN_2)} = \frac{g(N_1)}{g(N_2)}
\end{equation}
for any $c, N_1, N_2 > 0$, so that scaling cluster sizes by a constant factor has no impact on the prediction rule in \eqref{equation:g_crp_equation}.  The following Lemma~\ref{lemma:unique} shows that the pCRP in equation~\eqref{equation:powered_crp_equation} using the power function is the only $g$-CRP that satisfies the proportional invariance property. 
\begin{lemma}
	\label{lemma:unique}
	If a continuous function $g(x): \mathbb{R^+} \rightarrow \mathbb{R^+}$ satisfies equation~\eqref{eq:proportional.invariant}, then $g(x) = g(1) \cdot x^r$ for all $x > 0$ and some constant $r \in \mathbb{R}$. 
\end{lemma}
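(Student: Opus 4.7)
The plan is to reduce the proportional-invariance identity to the multiplicative Cauchy functional equation, then take logarithms to land on the standard additive Cauchy equation, whose continuous solutions are known to be linear.

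First I would fix $N_2 = 1$ in \eqref{eq:proportional.invariant}, giving $g(cN_1)/g(c) = g(N_1)/g(1)$ for all $c, N_1 > 0$, which rearranges to $g(cN_1) = g(c)\,g(N_1)/g(1)$. Next I would normalize by defining $h(x) := g(x)/g(1)$, so that $h:\mathbb{R}^+\to\mathbb{R}^+$ is continuous, satisfies $h(1)=1$, and obeys the multiplicative functional equation $h(xy) = h(x)\,h(y)$ for all $x,y>0$. (Note that $h$ is strictly positive since $g$ maps into $\mathbb{R}^+$ and $g(1)>0$, which is needed so that the ratios in \eqref{eq:proportional.invariant} are well-defined in the first place.)

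Then I would substitute $x = e^s$, $y = e^t$, and set $f(t) := \log h(e^t)$. The multiplicativity of $h$ together with continuity of $\log$ turns into the additive Cauchy equation
\begin{equation*}
f(s+t) = f(s) + f(t), \qquad s,t\in\mathbb{R},
\end{equation*}
with $f$ continuous on $\mathbb{R}$. By the classical result that every continuous solution of the additive Cauchy equation on $\mathbb{R}$ is of the form $f(t)=rt$ for some $r\in\mathbb{R}$, I obtain $\log h(e^t) = rt$, hence $h(x) = x^r$, and finally $g(x) = g(1)\cdot x^r$, which is the claimed form.

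I do not anticipate any real obstacle: the only non-routine ingredient is invoking the standard fact about continuous solutions of Cauchy's equation, which is available in any standard reference. The minor technical point worth being explicit about is that the codomain $\mathbb{R}^+$ (together with the well-posedness of \eqref{eq:proportional.invariant}) forces $g(x)>0$ for $x>0$, so the logarithm step is legitimate and no separate case analysis for vanishing values of $g$ is required.
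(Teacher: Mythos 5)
Your proof is correct and follows essentially the same route as the paper's: both reduce the proportional-invariance identity to the multiplicative Cauchy equation (the paper via the $N$-independence of $g(cN)/g(N)$, you via the normalization $h = g/g(1)$ after setting $N_2=1$) and then take logarithms to invoke the classical continuous solution of the additive Cauchy equation. No gaps.
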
 
\begin{proof}[Proof of Lemma~\ref{lemma:unique}]
	It is easy to verify that  $g(x) = g(1) \cdot x^r$ for some $r > 0$ is a solution to the functional equation~\eqref{eq:proportional.invariant}. We next show its uniqueness.
	
	Equation~\eqref{eq:proportional.invariant} implies that $g(cN_1)/g(N_1) = g(cN_2)/g(N_2)$ for any $N_1, N_2 > 0$. Denote $f(c) = g(cN)/g(N) > 0$ for arbitrary $N > 0$. We then have $f(s t) = g(stN)/g(N) = g(stN)/g(tN) \cdot g(tN) / g(N) = f(s) f(t)$ for any $s, t > 0$. By letting $f^*(x) = f(e^x) > 0$, it follows that $\log f^*(s + t) = \log f^*(s) + \log f^*(t)$, which is the well known Cauchy functional equation and has the unique solution $\log f^*(x) = r x$ for some constant $r$. Therefore, $f(x) = f^*(\log(x)) = x^r$ which gives $g(cN) = g(N) c^r$. We complete the proof by letting $N = 1$. 
\end{proof}

As a generalization of the CRP, which corresponds to the special case in which $r=1$, the proposed pCRP with $r > 1$ generates new clusters following a probability that is configuration dependent and not exchangeable. For example, for three customers $z_1, z_2, z_3$, $p(z_3 = 2 \mid z_1=1, z_2=1) < p( z_3 = 1 \mid z_1=1, z_2=2)$, where $z_i = k$ if the $i^{th}$ customer sits at table $k$. 
This non-exchangeability is a critical feature of pCRP, allowing new cluster generation to learn from existing patterns. Consider two extreme configurations: (i) $K_N = N$ with one member in each cluster, and (ii) $K_N = 1$ with all members in a single cluster. The probabilities of generating a new cluster under (i) and (ii) are both $\alpha/(N + \alpha)$ in CRP, but dramatically different in pCRP: (i) $\alpha/(N + \alpha)$ and (ii) $\alpha/(N^r + \alpha)$, respectively. Therefore, if the previous customers are more spread out, there is a larger probability of continuing this pattern by creating new tables.  Similarly, if customers choose a small
number of tables, then a new customer is more likely to join the dominant clusters rather than open a new table.

The power $r$ is a critical parameter controling how much we penalize small clusters. The larger the power $r$, the greater the penalty.  We propose a method to choose $r$ in a data-driven fashion: cross validation using a proper loss function to select a fixed $r$.

\subsection{Power parameter tuning}\label{sec:pcrp_parameter_calibration}
The proportional invariance property makes it easier to define a cross validation (CV) procedure for estimating $r$.  In particular, one can tune $r$ to obtain good performance on an initial training sample and that $r$ would also be appropriate for a subsequent data set that has a very different sample size.  For other choices of $g(\cdot)$, which do not possess proportional invariance, it may be necessary to adapt $r$ to the sample size for appropriate calibration.   

In evaluating generalization error, we use the following loss function based on within-cluster sum of squares:
\begin{equation}
 \sum_{k=1}^K  \sqrt{ \sum_{j: j \in C_k}^{N_k} ||\bx_j - \overline{\bx}_k||^2 },
\end{equation}
where $C_k$ is the data samples in the $k$th cluster and $ \overline{\bx}_k$ is the mean vector for cluster $k$. The square root has an important impact in favoring a smaller nunber of clusters; for example, inducing a price to be paid for introducing two clusters with the same mean.  In implementing CV, we start by choosing a small value of $r$ ($r=1+\epsilon$) and then increasing until we identify an inflection point.

\begin{figure}[h!]
\centering
  \includegraphics[width=0.3\textwidth]{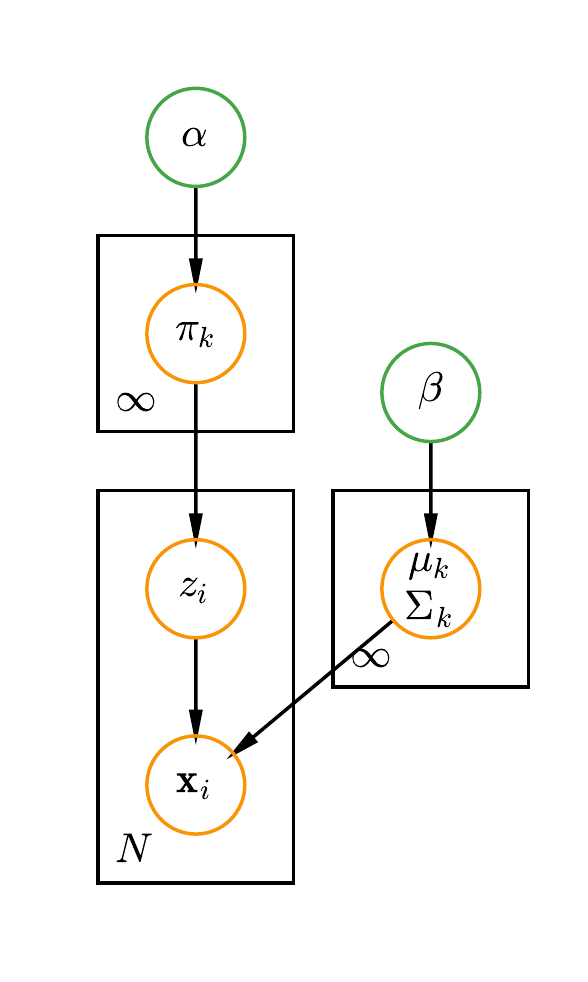}
  \caption{A Bayesian infinite GMM.}
  \label{fig:gmm_infinite_without_hyper}
\end{figure}

\subsection{Posterior inference by collapsed Gibbs sampling}
Although the proposed pCRP is generic, we focus on its application in Gaussian mixture models (GMMs) for concreteness. We develop a collapsed Gibbs sampling algorithm (Alg 3 in \cite{neal2000markov} and further introduced in \cite{murphy2012machine}) for posterior computation. Our proposed pCRP can also be easily implemented via a non-collapsed Gibbs sampling algorithm (\cite{west1993hierarchical}, i.e. Alg 2 in \cite{neal2000markov}). In addition, we permute the data at each sampling iteration to eliminate order dependence as in~\cite{socher2011spectral}.

Let $\mathcalX$ be the observations, assumed to follow a mixture of multivariate Gaussian distributions. We use a conjugate Normal-Inverse-Wishart (NIW) prior $p(\bmu, \bsigma | \bbeta)$ for the mean vector $\bmu$ and covariance matrix $\bsigma$ in each multivariate Gaussian component, where $\bbeta$ consists of all the hyperparameters in NIW. I.e. we will work with the following definition of Bayesian infinite Gaussian mixture model:
\begin{equation}
\begin{aligned}
\bx_i | z_i, \{\bmu_k, \bsigma_k \} &\sim \mathcal{N}(\bmu_{z_i}, \bsigma_{z_i}), \\
z_i | \bpi    									&\sim \multinomial(\pi_1, \ldots, \pi_K), \\
\{\bmu_k, \bsigma_k\}					&\sim \niw(\bbeta), \\
\bpi 											&\sim \dirichlet(\alpha/K, \ldots, \alpha/K),
\end{aligned}
\end{equation}
by taking the limit as $K \rightarrow \infty$ \cite{rasmussen1999infinite}, where $\mathcal{N}(\cdot, \cdot)$ is the multivariate normal distribution, $\multinomial(\cdot)$ is the Multinomial distribution and $\dirichlet(\cdot)$ is the Dirichlet distribution. The process is illustrated in Figure~\ref{fig:gmm_infinite_without_hyper}.

A key quantity in a collapsed Gibbs sampler is the probability of each customer $i$ sitting with table $k$: $p(z_i = k | \bznoi, \mathcalX, \alpha, \bbeta)$, where $\bznoi$ are the seating assignments of all the other customers and $\alpha$ is the concentration parameter in CRP and pCRP. This probability is calculated as follows:
\begin{equation} 
\begin{aligned}
&p(z_i = k| \bznoi, \mathcal{X} , \alpha, \bbeta)  \\
& \varpropto p(z_i = k | \bznoi, \alpha, \cancel{\bbeta})  p(\mathcal{X} |z_i = k, \bznoi, \cancel{\alpha}, \bbeta) \\
& = p(z_i = k| \bznoi, \alpha) p(\bxi |\mathcal{X}_{-i}, z_i = k, \bznoi, \bbeta) 
\\
&\,\,\, \cdot p(\mathcal{X}_{-i} |\cancel{z_i = k}, \bznoi, \bbeta)\\
& \varpropto p(z_i = k| \bznoi, \alpha) p(\bxi|\mathcal{X}_{-i}, z_i = k, \bznoi, \bbeta) \\
& \varpropto p(z_i = k| \bznoi, \alpha) p(\bxi | \xknoi, \bbeta),\\
\end{aligned}
\label{equation:pcrp_ifmm_collabsed_gibbs}
\end{equation}  
where $\xknoi$ are the observations in table $k$ excluding the $i^{th}$ observation and the first term of the last equation above $p(z_i = k| \bznoi, \alpha)$ is the proposed pCRP in equation~\eqref{equation:powered_crp_equation}. 
If $z_i = k$ is an existing component, the second term above $p(\bxi | \xknoi, \bbeta)$ is calculated using the posterior predictive distribution at $\bxi$, where the posterior prediction distribution of the new data $\bx^\star$ given the data set $\mathcalX$ and the prior parameter $\bbeta$ under the NIW prior is
\begin{equation}
p(\bx^\star | \mathcalX, \bbeta) = \int_{\bmu}  \int_{\bsigma} p(\bx^\star | \bmu, \bsigma) p(\bmu,\bsigma | \mathcalX, \bbeta) d\bmu d \bsigma.
\end{equation}
When $z_i = k^\star$ is a new component then we have:
\begin{equation}
\begin{aligned}
&p(\bxi | \xknoi, \bbeta) = p(\bxi | \bbeta) \\
&= \int_{\bmu}  \int_{\bsigma} p(\bxi | \bmu, \bsigma) p(\bmu, \bsigma|\bbeta) d\bmu d \bsigma,
\end{aligned}
\end{equation}
which is just the prior predictive distribution and can be calculated by the posterior predictive distribution for new data $p(\bx^\star | \mathcalX, \bbeta)$ under NIW prior but with $\mathcalX = \{\emptyset\}$.

Algorithm~\ref{algo:pcrp_ifmm_plain_gibbs} gives the pseudo code of the collapsed Gibbs sampler to implement pCRP in Gaussian mixture models.

\begin{algorithm}[tb]
   \caption{Collapsed Gibbs Sampling for pCRP}
   \label{alg:example}
\begin{algorithmic}
\STATE {\bfseries Input:} Choose an initial $\bz$, $r$, $\alpha$, $\bbeta$;
\FOR{$T$ iterations}   
\STATE $\bullet$ Sample random permutation $\tau$ of $1, \ldots, N$;
	\FOR{$i \in (\tau(1), \ldots, \tau(N))$}
		\STATE $\bullet$ Remove $\bxi$'s statistics from component $z_i$;
		\FOR{$k=1$ {\bfseries to} $K$}
		\STATE $\bullet$ Calculate $p(z_i=k| \bznoi, \alpha) =  \frac{N_{k,-i}^r}{\sum_h^K N_{h,-i}^r+\alpha}$;
		\STATE $\bullet$ Calculate $p(\bxi | \xknoi, \bbeta)$;
		\STATE $\bullet$ Calculate $p(z_i = k | \bznoi, \mathcal{X}, \alpha, \bbeta) \propto p(z_i=k| \bznoi, \alpha) p(\bxi | \xknoi, \bbeta)$;
		\ENDFOR
		\STATE $\bullet$ Calculate $p(z_i = k^\star | \bznoi, \alpha)= \frac{\alpha}{\sum_h^K N_{h,-i}^r+\alpha}$;
		\STATE $\bullet$ Calculate $p(\bxi | \bbeta)$;
		\STATE $\bullet$ Calculate $p(z_i = k^\star | \bznoi, \mathcalX, \alpha, \bbeta) \propto p(z_i = k^\star | \bznoi, \alpha) p(\bxi | \bbeta)$;		
		\STATE $\bullet$ Sample $k_{new}$ from $p(z_i | \bznoi, \mathcalX, \alpha, \bbeta)$ after normalizing;
		\STATE $\bullet$ Add $\bxi$'s statistics to the component $z_i=k_{new}$ ;
	     \STATE $\bullet$ If any component is empty, remove it and decrease $K$.
	\ENDFOR
\ENDFOR
\end{algorithmic}\label{algo:pcrp_ifmm_plain_gibbs}
\end{algorithm}

\section{Experiments}
We conduct experiments to demonstrate the main advantages of the proposed pCRP using both synthetic and real data. In a wide range of scenarios across various sample sizes, pCRP reduces over-clustering of CRP, and leads to performances that are as good or better than CRP in terms of density estimation, out of sample prediction, and overall clustering results. 

In all experiments, we run the Gibbs sampler 20, 000 iterations with a burn-in of 10, 000. The sampler is thinned by keeping every 5$^{th}$ draw. We use the same concentration parameter $\alpha = 1$ for both CRP and pCRP in all scenarios. In addition, we equip CRP with an unfair advantage to match the magnitude of its prior mean $\alpha \log(N)$ to the true number of clusters, termed {\it CRP-Oracle}. The power $r$ in pCRP is tuned using cross validation. In order to measure overall clustering performance, we use normalized mutual information (NMI) \cite{mcdaid2011normalized} and variation of information (VI) \cite{meilua2003comparing}, which measures the similarity between the true and estimated cluster assignments. Higher NMI and lower VI indicate better performance.  If applicable, metrics using the true clustering are calculated to provide an upper bound for all methods, coded as `Ground Truth'. 

\subsection{Simulation experiments}\label{sec:simulated_examples} 
We first use simulated data to assess the performance of pCRP in emptying extra components, compared to the traditional CRP.  Figure~\ref{fig:pcrp_prior_densities} shows the true data generating density, which represent the two cases of well-mixed Gaussian components and shared mean Gaussian mixture coded as Sim 1 and Sim 2, respectively. 

The oracle concentration parameters in CRP-Oracle are (0.52, 0.40) in Sim 1 and (0.52, 0.40) in Sim 2, which are all smaller than the unit concentration parameter used in CRP and pCRP.  The sample sizes in the two simulation cases are respectively (300, 2000).  Figure~\ref{fig:pcrp_cross-validation} shows the cross validation curve to select $r$ in pCRP using a training data set with 200 samples. The representative cases of inflection point described in Section \ref{sec:pcrp_parameter_calibration} were observed: the loss curve for cross validation `blows up' for one particular $r$ value in both Sim 1 and Sim 2.  We can find the first stage of Sim 2 is oscillating from 14.2595 to 14.2005 which is flatter than that of Sim 1 that oscillates from 15.5260 to 14.9020. This is because the components in Sim 2 are well separated. We choose this change point as the power $r$ in either case. 

\begin{figure}[h!]
\center
\subfigure[Sim 1]{\includegraphics[width=0.228\textwidth]{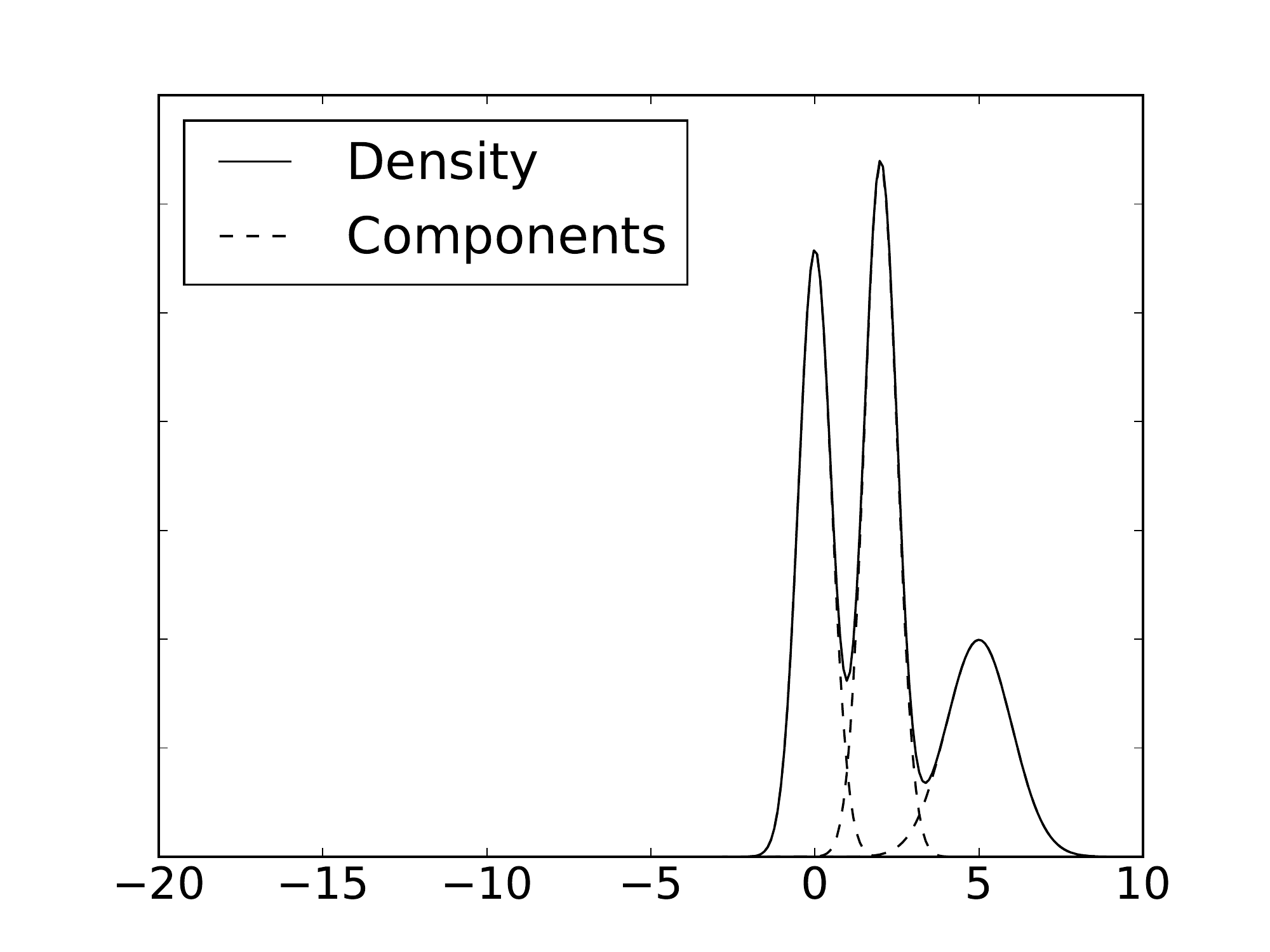} \label{fig:true_density_sim2}}
~
\subfigure[Sim 2]{\includegraphics[width=0.228\textwidth]{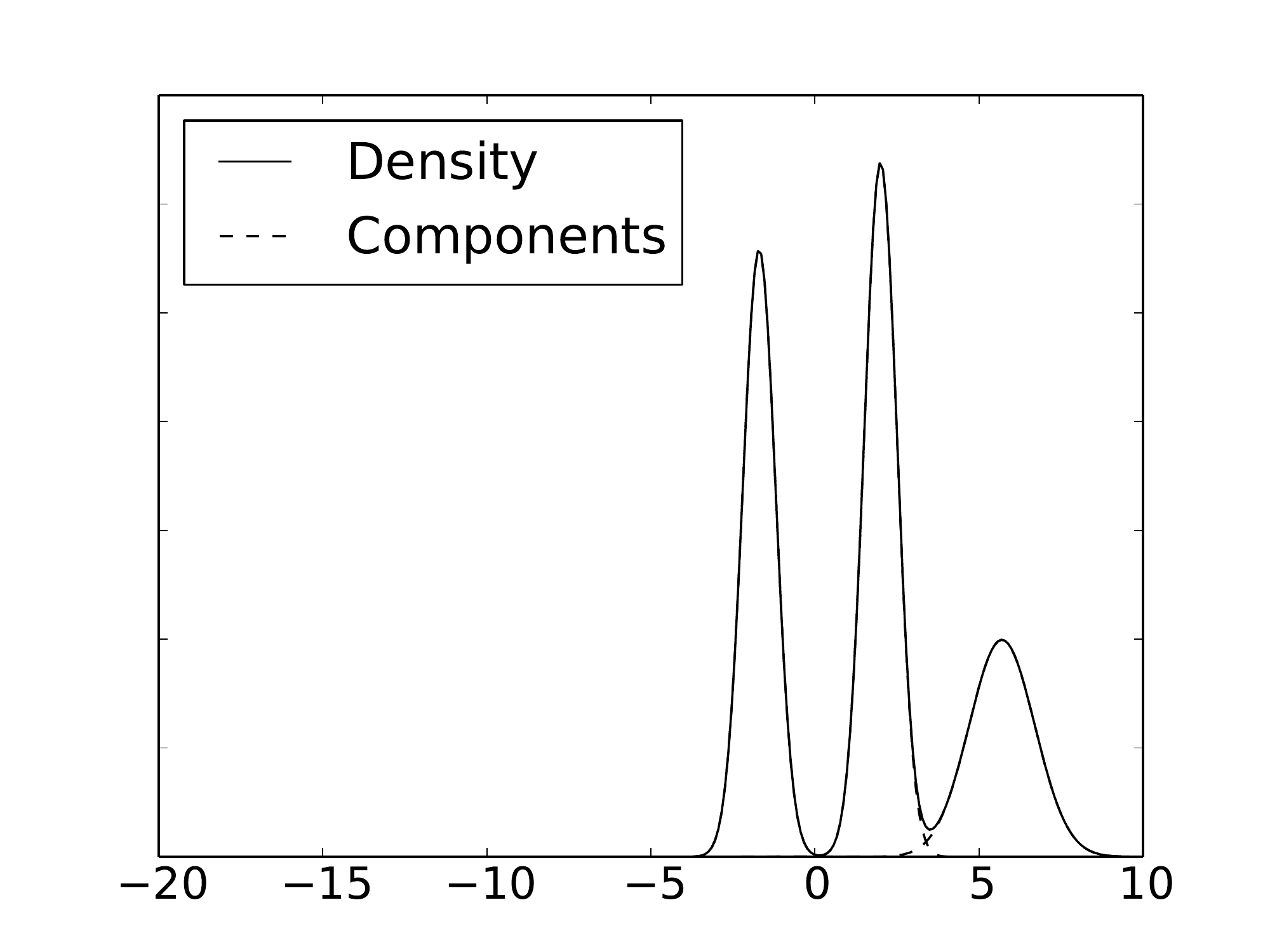} \label{fig:true_density_sim6}}
\center
\caption{Data generating densities for two scenarios: (a) Sim1: a mixture of three poorly separated Gaussian components; (b) Sim 2: a mixture of three well separated Gaussian components.}
\label{fig:pcrp_prior_densities}
\end{figure}

\begin{figure}[h!]
\center
\subfigure[Sim 1]{\includegraphics[width=0.228\textwidth]{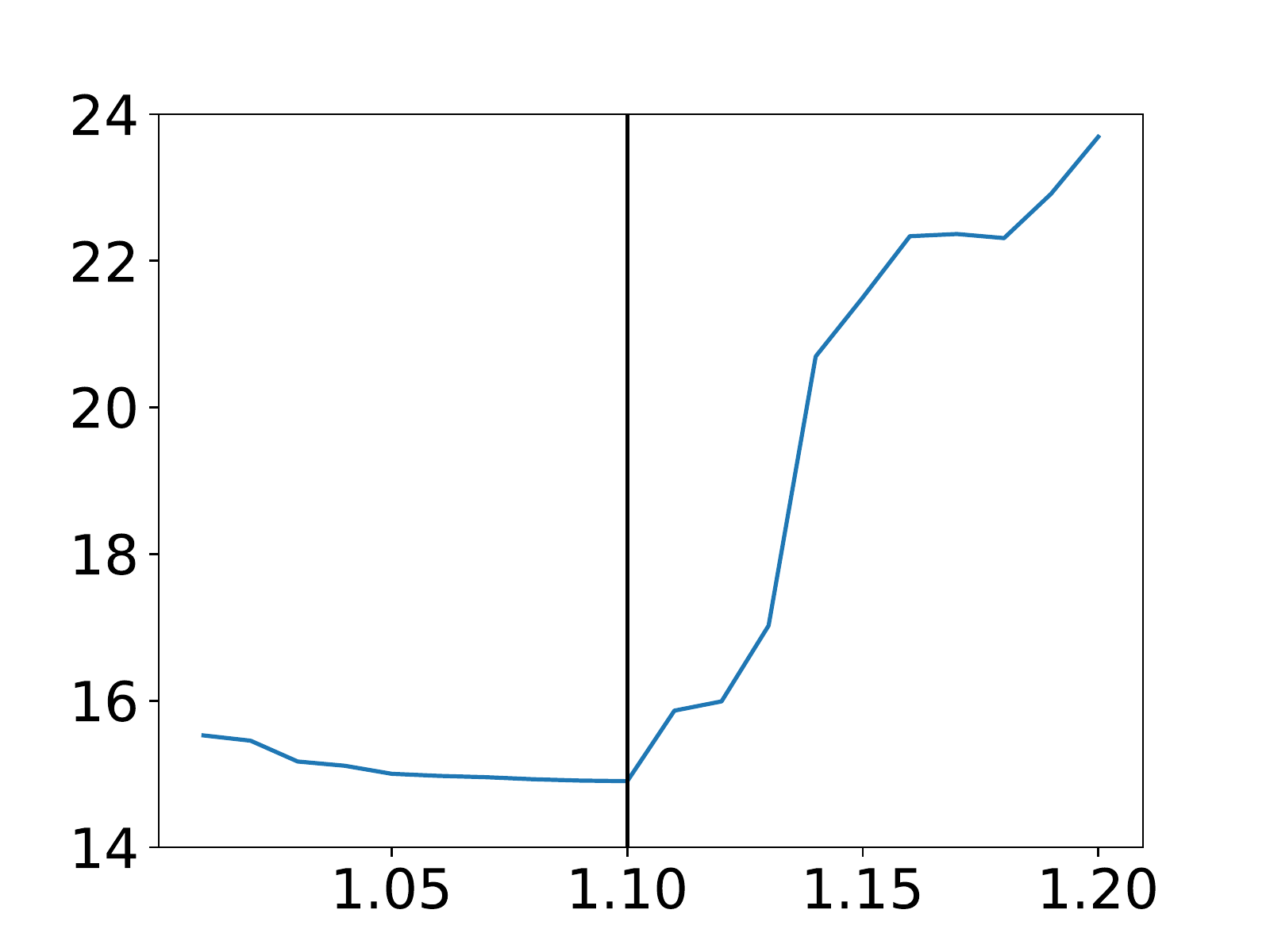} \label{fig:pcrp_cv_sim2}}
~
\subfigure[Sim 2]{\includegraphics[width=0.228\textwidth]{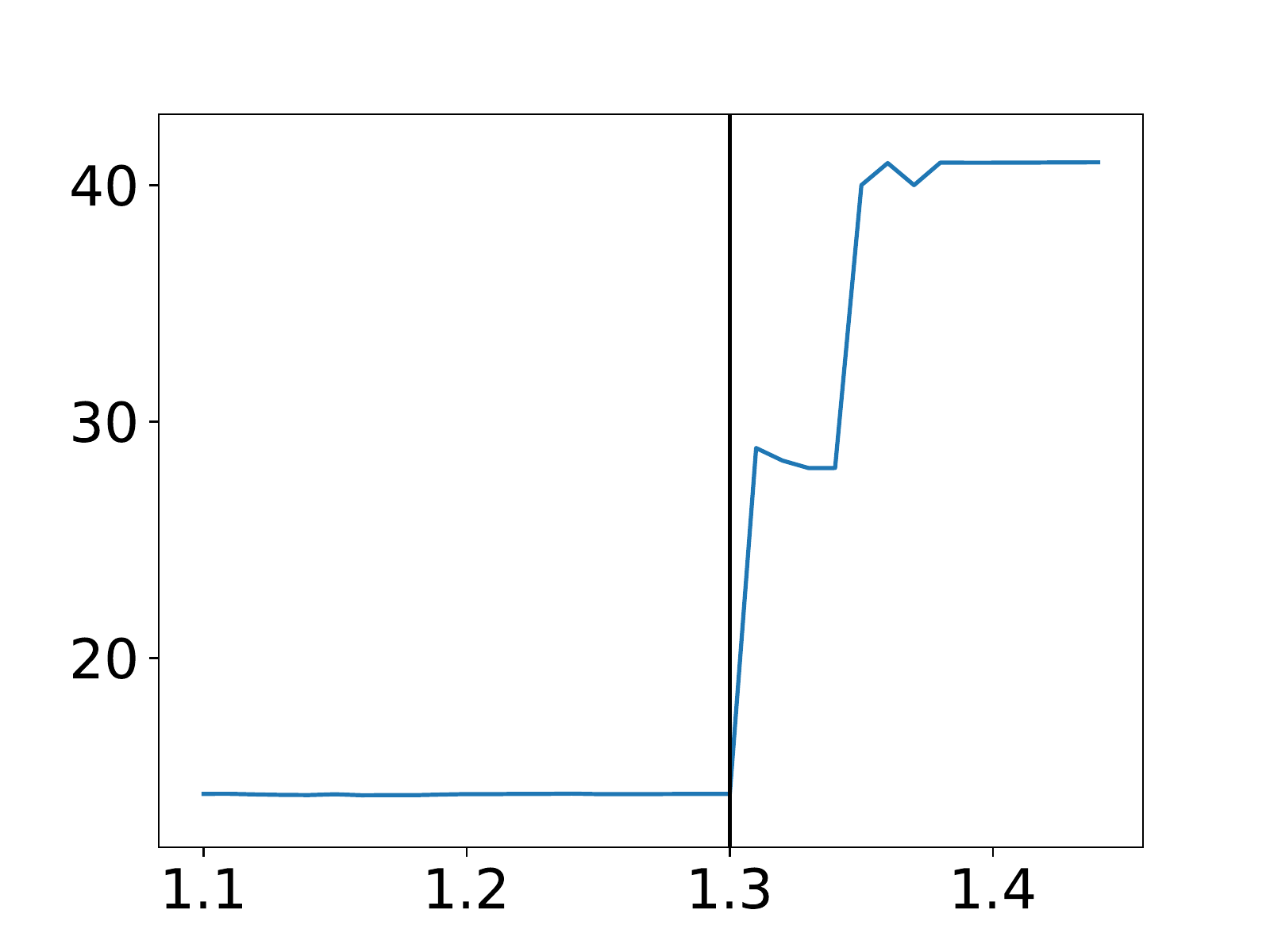} \label{fig:pcrp_cv_sim6}}
\center
\caption{Cross validation curves to choose $r$ for Sim 1 and Sim 2. The $x$-axis is the power value, the $y$-axis is the loss. The vertical line is the chosen power $r$ value.}
\label{fig:pcrp_cross-validation}
\end{figure}

\begin{figure}[!h]
	\center
	\subfigure[ CRP-Oracle]{\includegraphics[width=0.40\textwidth]{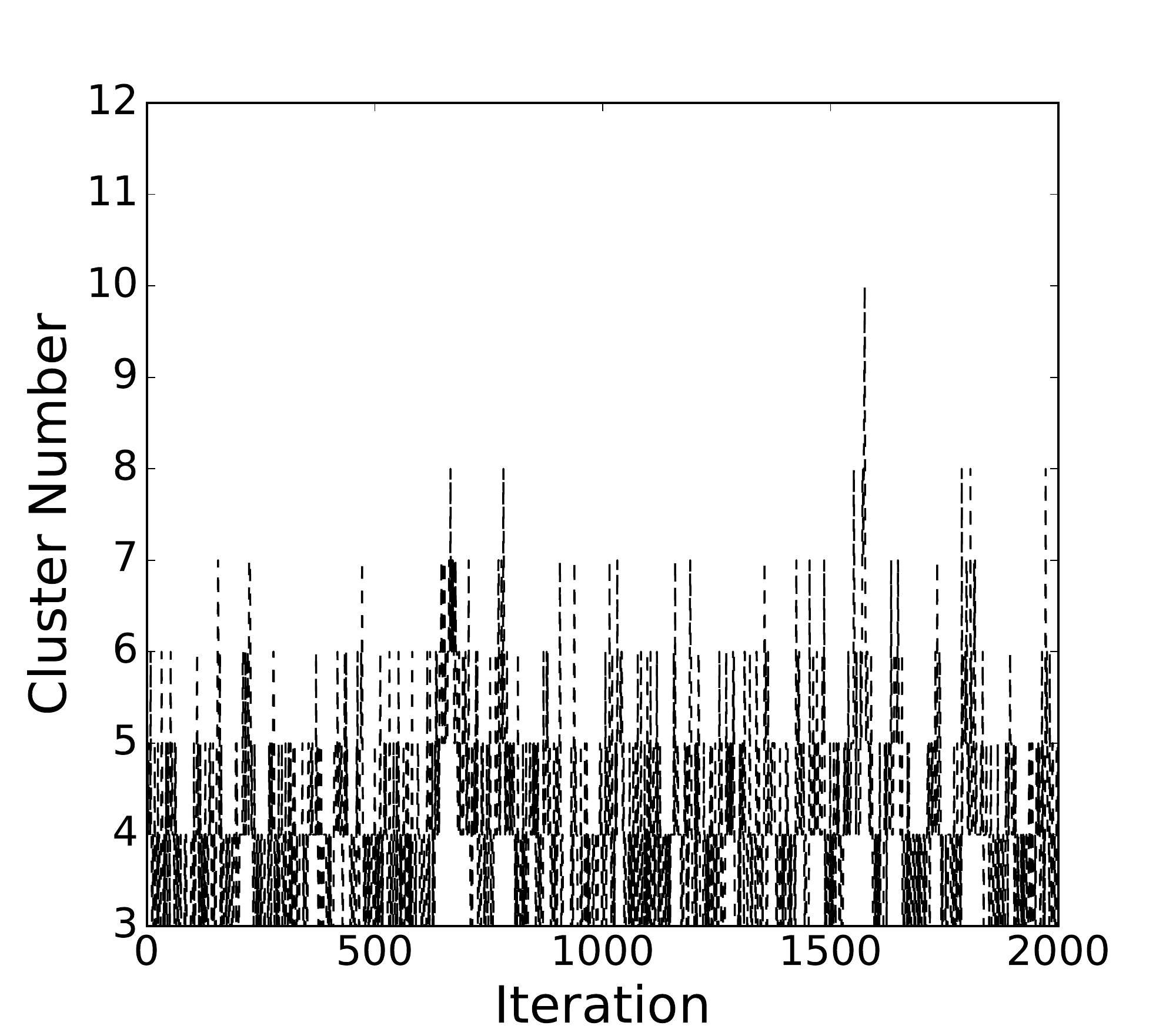} \label{fig:pcrp_clusternum_3methods_traceplot_sim1_crp04}}
	\hfill
	\subfigure[ CRP]{\includegraphics[width=0.23\textwidth]{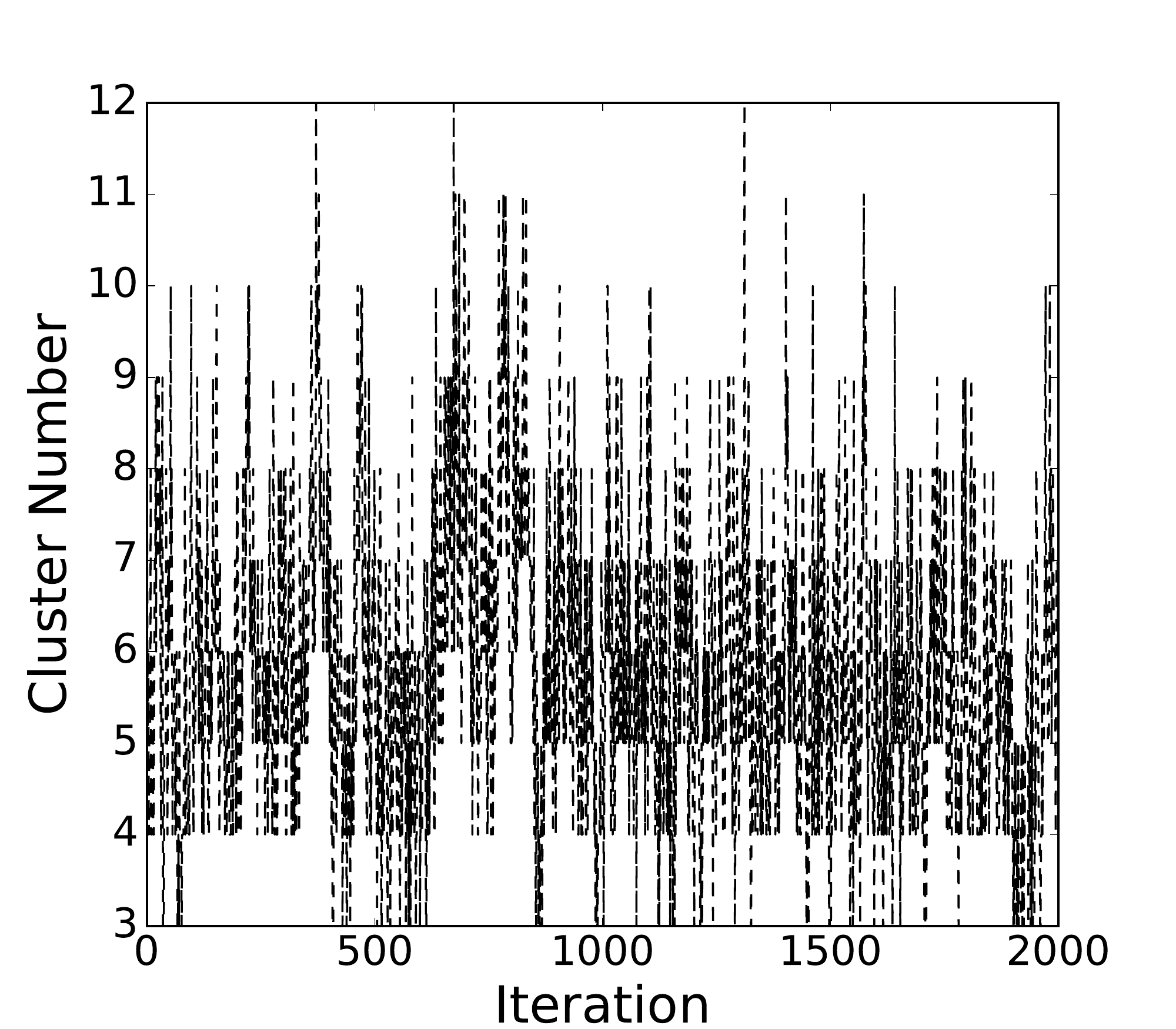} \label{fig:pcrp_clusternum_3methods_traceplot_sim1_crp1}}
	\hfill
	\subfigure[ pCRP]{\includegraphics[width=0.23\textwidth]{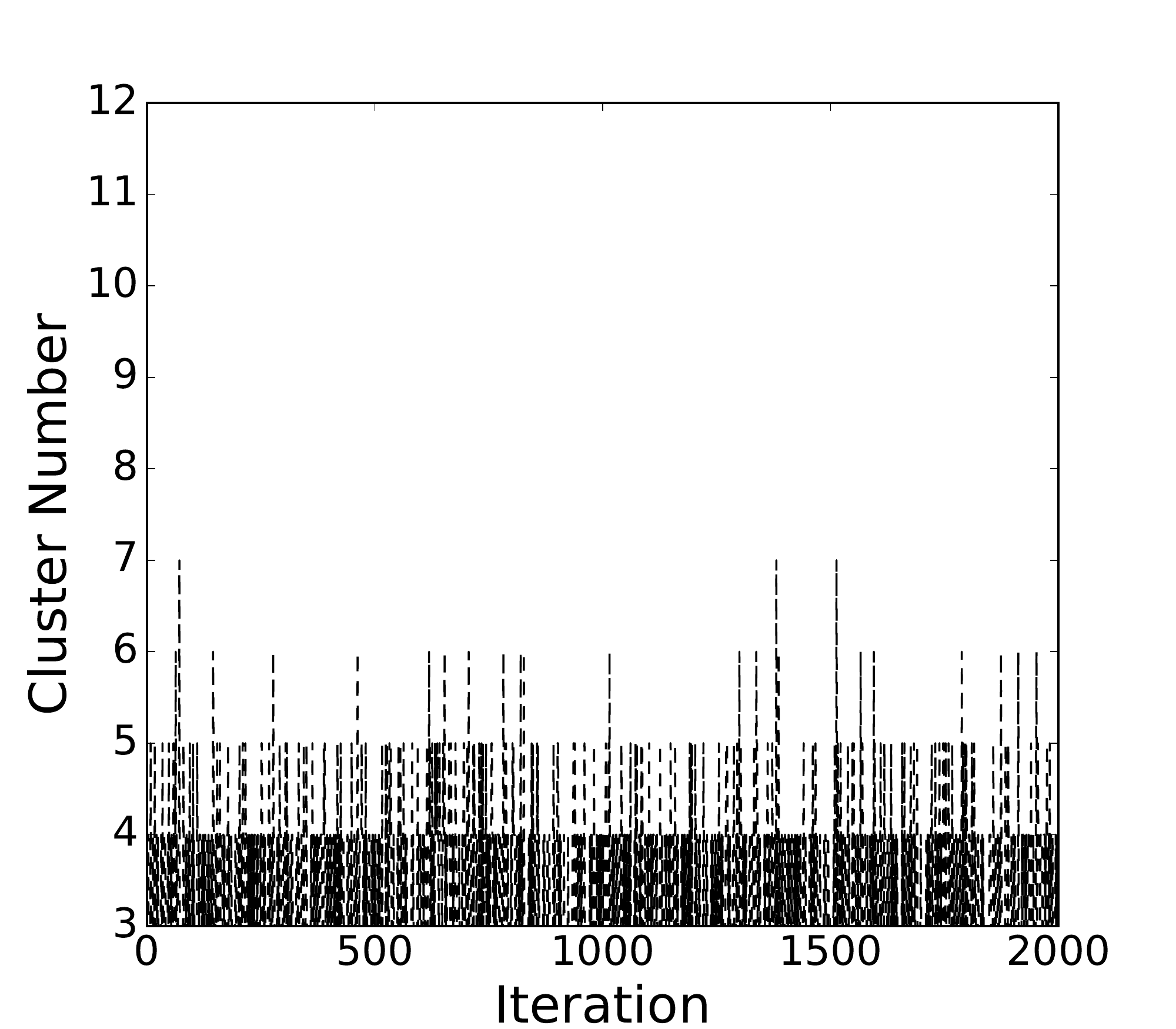} \label{fig:pcrp_clusternum_3methods_traceplot_sim1_pcrp1}}
	\center
	\caption{Traceplots of cluster numbers using the three methods in Sim 1 when $N=2000$. The $x$-axis is the sampling iteration, the $y$-axis is the number of clusters.}
	\label{fig:pcrp_clusternum_3methods_traceplot_sim1}
\end{figure}

\begin{figure*}[!h]
	\center
	\subfigure[CRP-Oracle]{\includegraphics[width=0.32\textwidth]{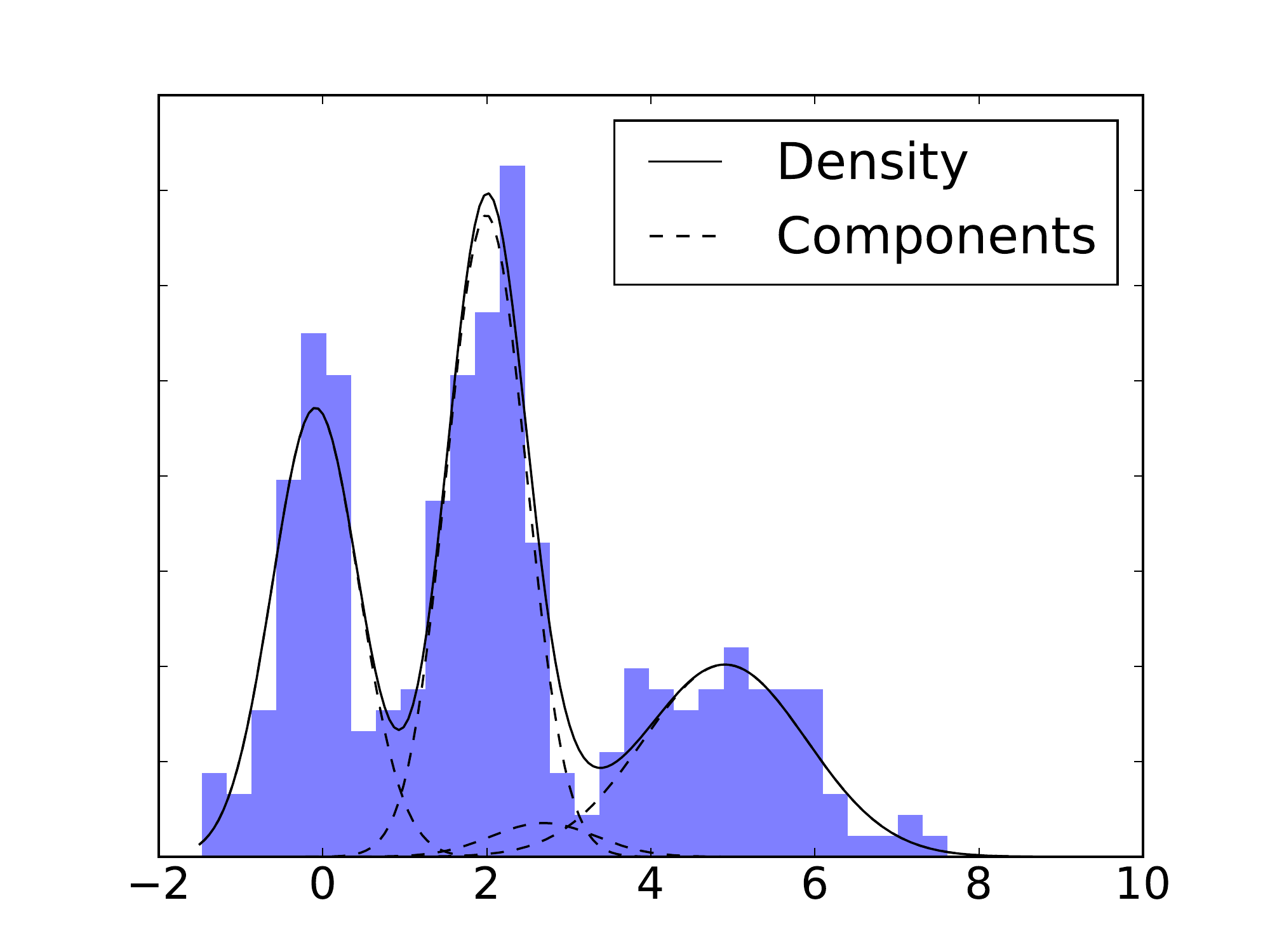} \label{fig:adhoc_3methods_traceplot_sim1_posterior_densities_crp_match}}
	\hfill
	\subfigure[CRP]{\includegraphics[width=0.32\textwidth]{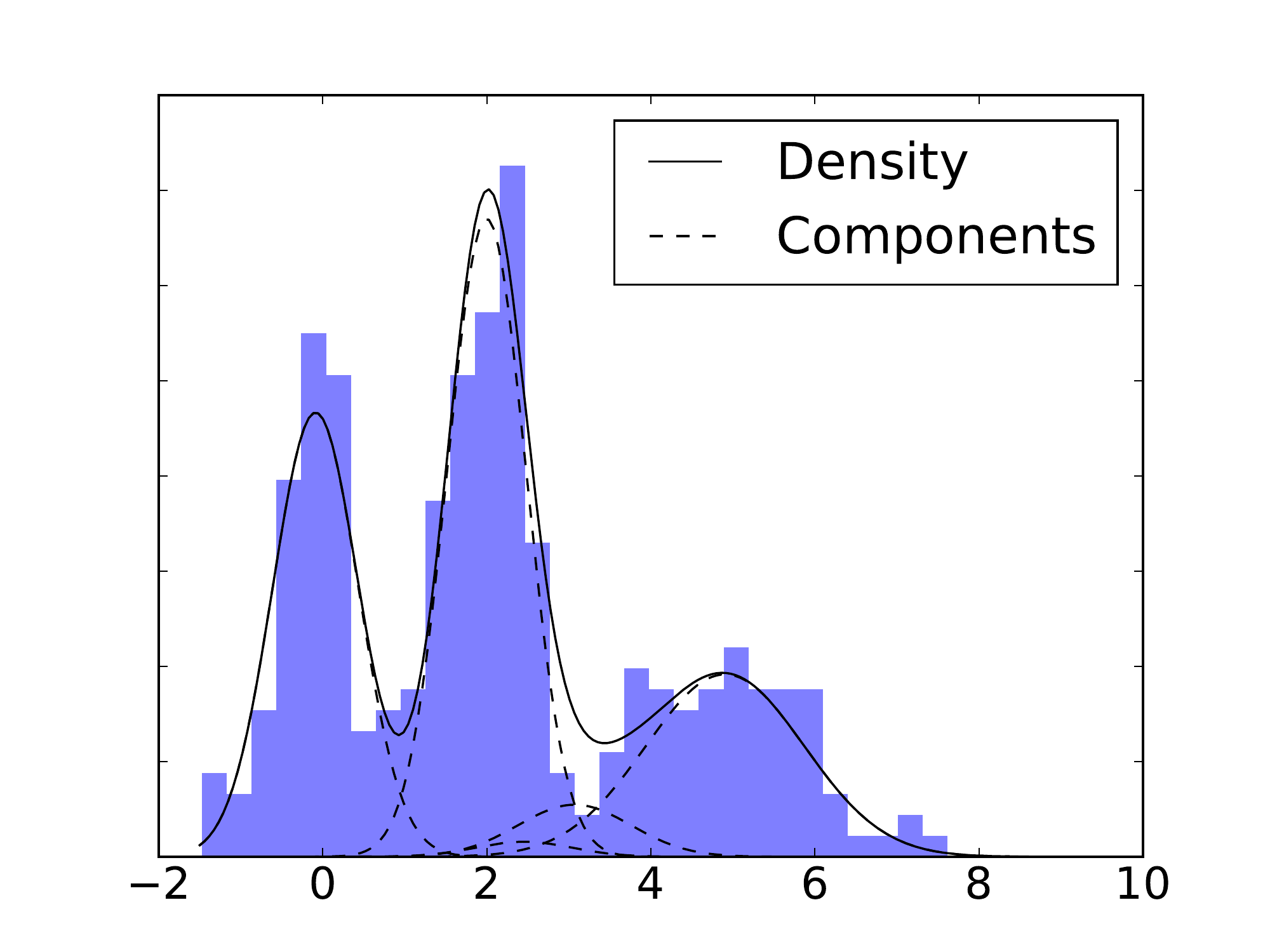} \label{fig:adhoc_3methods_traceplot_sim1_posterior_densities_crp1}}
	\hfill
	\subfigure[pCRP]{\includegraphics[width=0.32\textwidth]{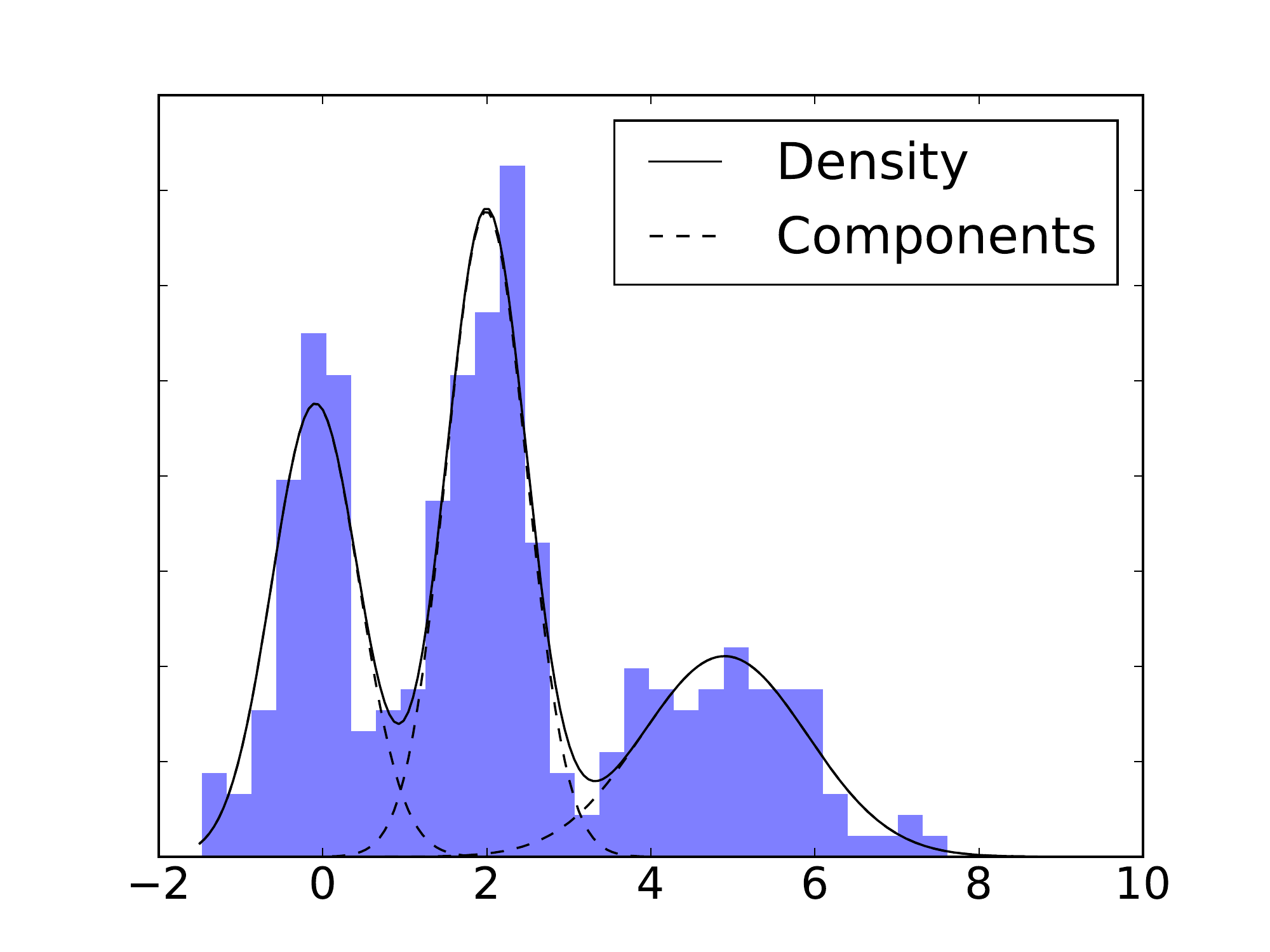} \label{fig:adhoc_3methods_traceplot_sim1_posterior_densities_pcrp1}}
	\center
	\caption{Posterior densities for three methods in Sim 1 when $N=2000$. The dashed lines are weighted components. 
	}
	\label{fig:pcrp_3methods_traceplot_sim1_posterior_densities}
\end{figure*}

Figure~\ref{fig:pcrp_clusternum_3methods_traceplot_sim1} shows traceplots of posterior samples for the number of clusters for each of the methods in Sim 1.  Clearly pCRP places relatively high posterior probability on three clusters, which is the ground truth. In contrast, CRP has higher posterior variance, systematic over-estimation of the number of clusters, and worse computational efficiency.  The CRP-Oracle has better performance, but does clearly worse than p-CRP, and there is still a tendency for over-estimation.  This demonstrates that one cannot simply fix up or calibrate the CRP by choosing the precision to be appropriately small.
Figure~\ref{fig:posterior_num_clusters} suggests that CRP will have larger probability on larger cluster numbers especially when the sample size increases, while pCRP tends to have larger probability on the true cluster number as the sample size increases. 
For example, in Sim 1, the probability of selecting three clusters increases from 0.55 to 0.68 in pCRP when $N$ increases from 300 to 2000 and the probability for all the other cluster number decreases. 
However, the probability of finding four clusters stabilizes around 0.37 and 0.38 in CRP-Oracle when $N$ increases from 300 to 2000. CRP has increased probability of selecting larger number of clusters (say 5, 6, 7, 8 clusters) when $N$ increases from 300 to 2000. In fact, the proposed pCRP has the largest concentration probability on the true number of clusters among all the three methods including CRP-Oracle, and this observation is consistent between $N = 300$ and $N = 2000$.  

Table~\ref{table:pcrp_simulation_posterior_summary} provides numerical summaries of this simulation. We can see all three methods lead to similar NMI, but pCRP consistently gives the highest value. Furthermore, pCRP leads to the lowest value of VI in most tests. The parsimonious effect of pCRP discussed above is further confirmed by the average and maximum number of clusters; see the columns $K$ and $K_{\max}$ in the table.

The posterior density plots in Figure~\ref{fig:pcrp_3methods_traceplot_sim1_posterior_densities} show that there is one small unnecessary cluster in CRP-Oracle and two small unnecessary clusters in CRP, while all three methods capture the general shape of the true density and thus provide good fitting performance. The over-clustering effect of CRP is much reduced by pCRP as seen in Figure~\ref{fig:adhoc_3methods_traceplot_sim1_posterior_densities_pcrp1}.

\begin{figure}[h!]
	\center
	\subfigure[Sim 1, $N$=300]{\includegraphics[width=0.22\textwidth]{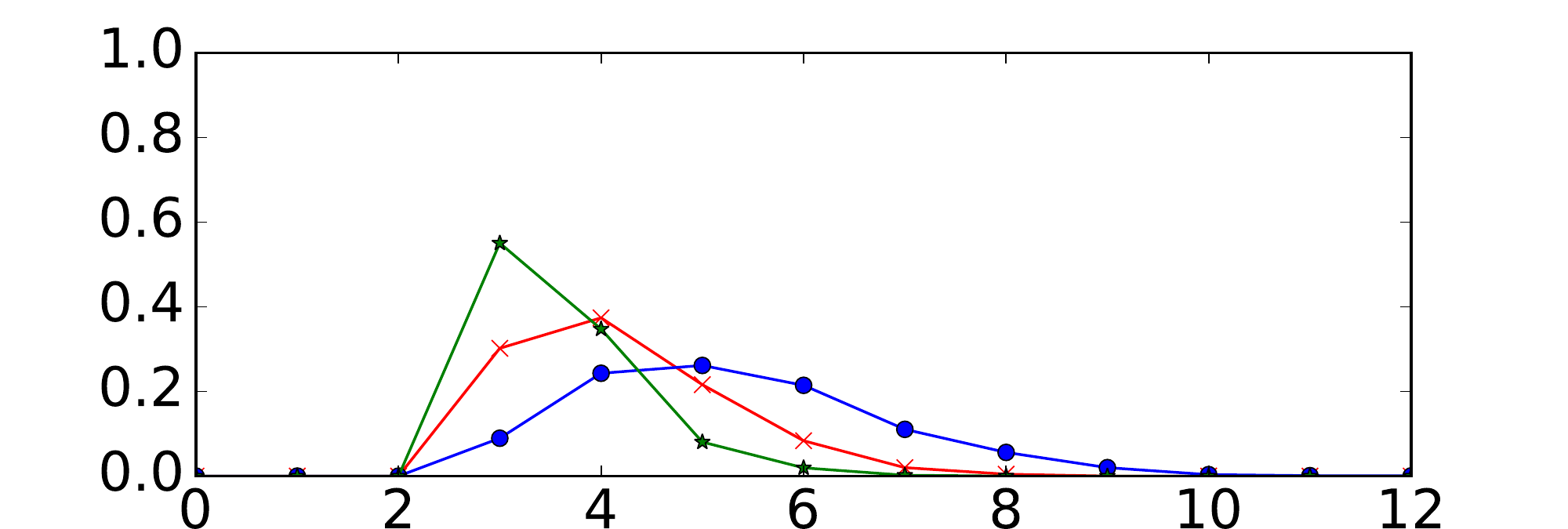} \label{fig:posterior_num_sim2_n300}}
	~
	\subfigure[Sim 2, $N$=300]{\includegraphics[width=0.22\textwidth]{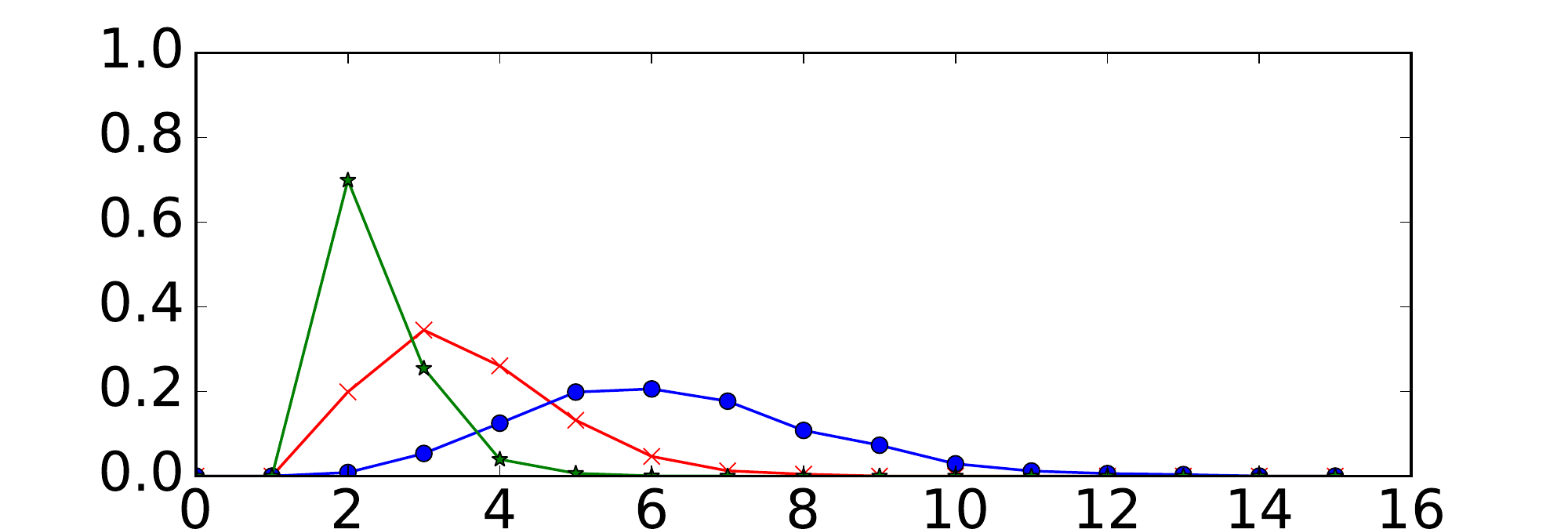} \label{fig:posterior_num_sim6_n300}}
	~
	\subfigure[Sim 1, $N$=2000]{\includegraphics[width=0.22\textwidth]{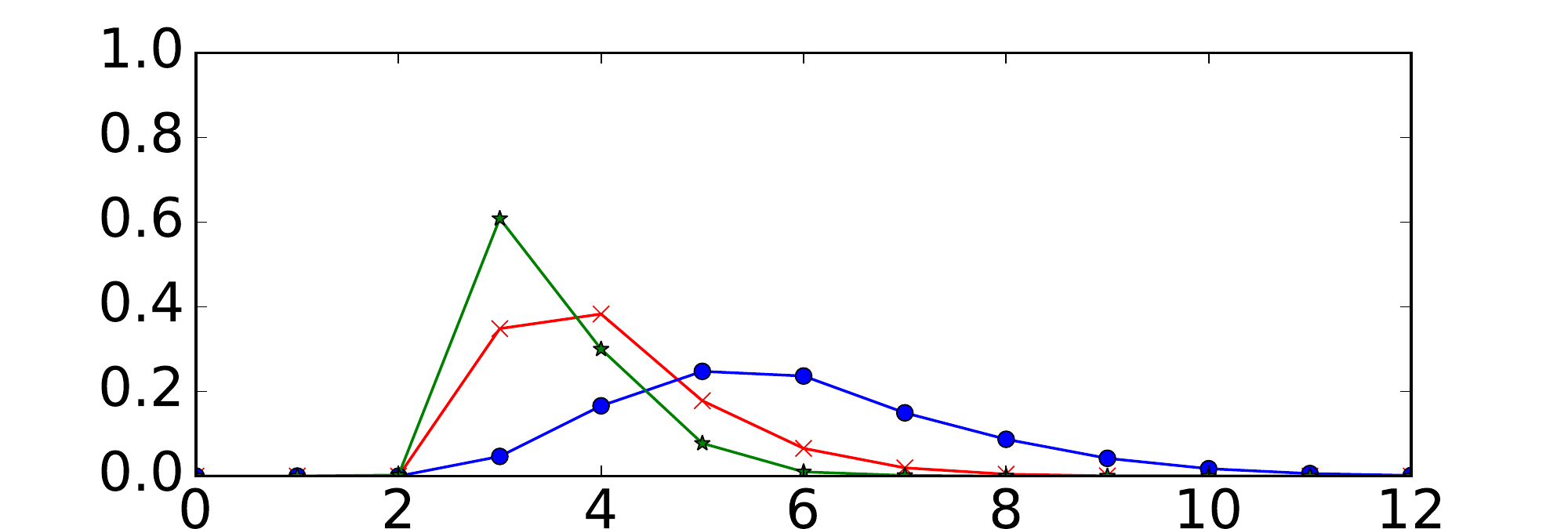} \label{fig:posterior_num_sim2_n2000}}
	~
	\subfigure[Sim 2, $N$=2000]{\includegraphics[width=0.22\textwidth]{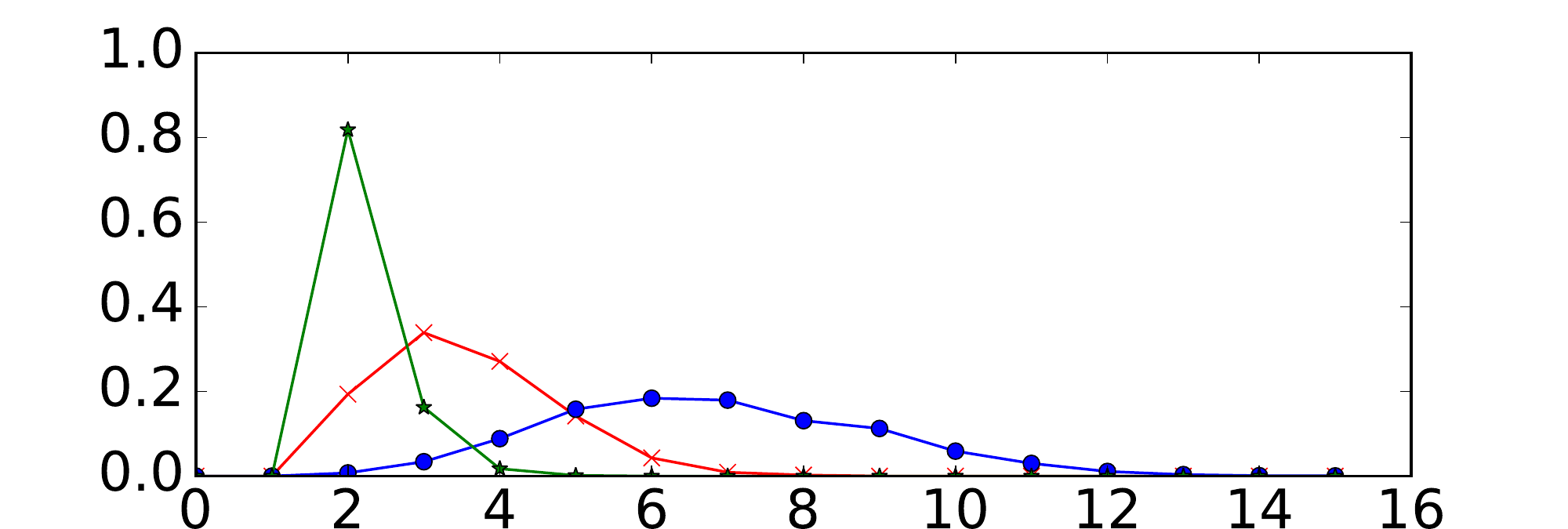} \label{fig:posterior_num_sim6_n2000}}
	\center
	\caption{Estimated posterior of the number of clusters in observed data for CRP-Oracle (red x), CRP (blue circle) and pCRP (green star). }
	\label{fig:posterior_num_clusters}
\end{figure}

\begin{table*}[h!]
\centering
\begin{tabular}{|ccccc|}
\hline
\multicolumn{5}{|c|}{$N=300$}                                                                     \\ \hline
Method               & NMI (SE) & VI (SE) & $K$ (SE) & $K_{\max}$ \\ \hline
Ground truth (Sim 1) & 1.0                & 0.0               & 3                  & -            \\
CRP-Oracle (Sim 1)   & 0.800 (1.1$\times 10^{-3}$)        & 0.669 (4.5$\times 10^{-3}$)       & 4.2 (2.3$\times 10^{-2}$)          & 8            \\
CRP (Sim 1)          & 0.773 (1.2$\times 10^{-3}$)        & 0.795 (5.4$\times 10^{-3}$)       & 5.3 (3.3$\times 10^{-2}$)          & 12           \\
pCRP (Sim 1)         & \textbf{0.827} (0.7$\times 10^{-3}$)       & \textbf{0.580} (4.4$\times 10^{-3}$)       & \textbf{3.6} (1.7$\times 10^{-2}$)          & \textbf{7}            \\ \hline
Ground truth (Sim 2) & 1.0                & 0.0               & 2                  & -            \\
CRP-Oracle (Sim 2)   & 0.937 (0.8$\times 10^{-3}$)        & 0.210 (3.0$\times 10^{-3}$)       & 4.0 (2.3$\times 10^{-2}$)          & 9            \\
CRP (Sim 2)          & 0.917 (0.9$\times 10^{-3}$)        & 0.287 (3.7$\times 10^{-3}$)       & 4.8 (2.9$\times 10^{-2}$)          & 12           \\
pCRP (Sim 2)         & \textbf{0.963} (0.3$\times 10^{-3}$)        & \textbf{0.116} (0.8$\times 10^{-3}$)       & \textbf{3.2} (0.9$\times 10^{-2}$)          & \textbf{6}            \\ \hline \hline
\multicolumn{5}{|c|}{$N=2000$}                                                                    \\ \hline
Method               & NMI (SE) & VI (SE) & $K$ (SE) & $K_{\max}$ \\ \hline
Ground truth (Sim 1) & 1.0                & 0.0               & 3                  & -            \\
CRP-Oracle (Sim 1)   & 0.812 (5.3$\times 10^{-4}$)        & 0.610 (2.6$\times 10^{-3}$)       & 4.0 (2.3$\times 10^{-2}$)          & 10           \\
CRP (Sim 1)          & 0.782 (8.5$\times 10^{-4}$)        & 0.732 (4.0$\times 10^{-3}$)       & 5.8 (3.6$\times 10^{-2}$)          & 12           \\
pCRP (Sim 1)         & \textbf{0.823} (6.6$\times 10^{-4}$)        & \textbf{0.869} (7.3$\times 10^{-3}$)       & \textbf{3.5} (1.6$\times 10^{-2}$)          & \textbf{7}            \\ \hline
Ground truth (Sim 2) & 1.0                & 0.0               & 2                  & -            \\
CRP-Oracle (Sim 2)   & 0.962 (4.8$\times 10^{-4}$)        & 0.122 (1.7$\times 10^{-3}$)       & 4.1 (2.2$\times 10^{-2}$)          & 8            \\
CRP (Sim 2)          & 0.940 (8.2$\times 10^{-4}$)        & 0.205 (3.1$\times 10^{-3}$)       & 5.6 (3.5$\times 10^{-2}$)          & 13           \\
pCRP (Sim 2)         & \textbf{0.977} (1.2$\times 10^{-4}$)        & \textbf{0.072} (0.4$\times 10^{-3}$)      & \textbf{3.1} (0.8$\times 10^{-2}$)          & \textbf{6}            \\ \hline
\end{tabular}
\caption{Comparison of CRP and pCRP on Sim 1 and Sim 2. $K$ is the average number of found clusters. $K_{\max}$ is the maximum number of clusters during sampling. SE is the standard error of mean. Ground truth is calculated using the true assignments.}
\label{table:pcrp_simulation_posterior_summary}
\end{table*}

\begin{figure}[h!]
	\center
	\subfigure[True clustering when $N$=3000]{\includegraphics[width=0.42\textwidth]{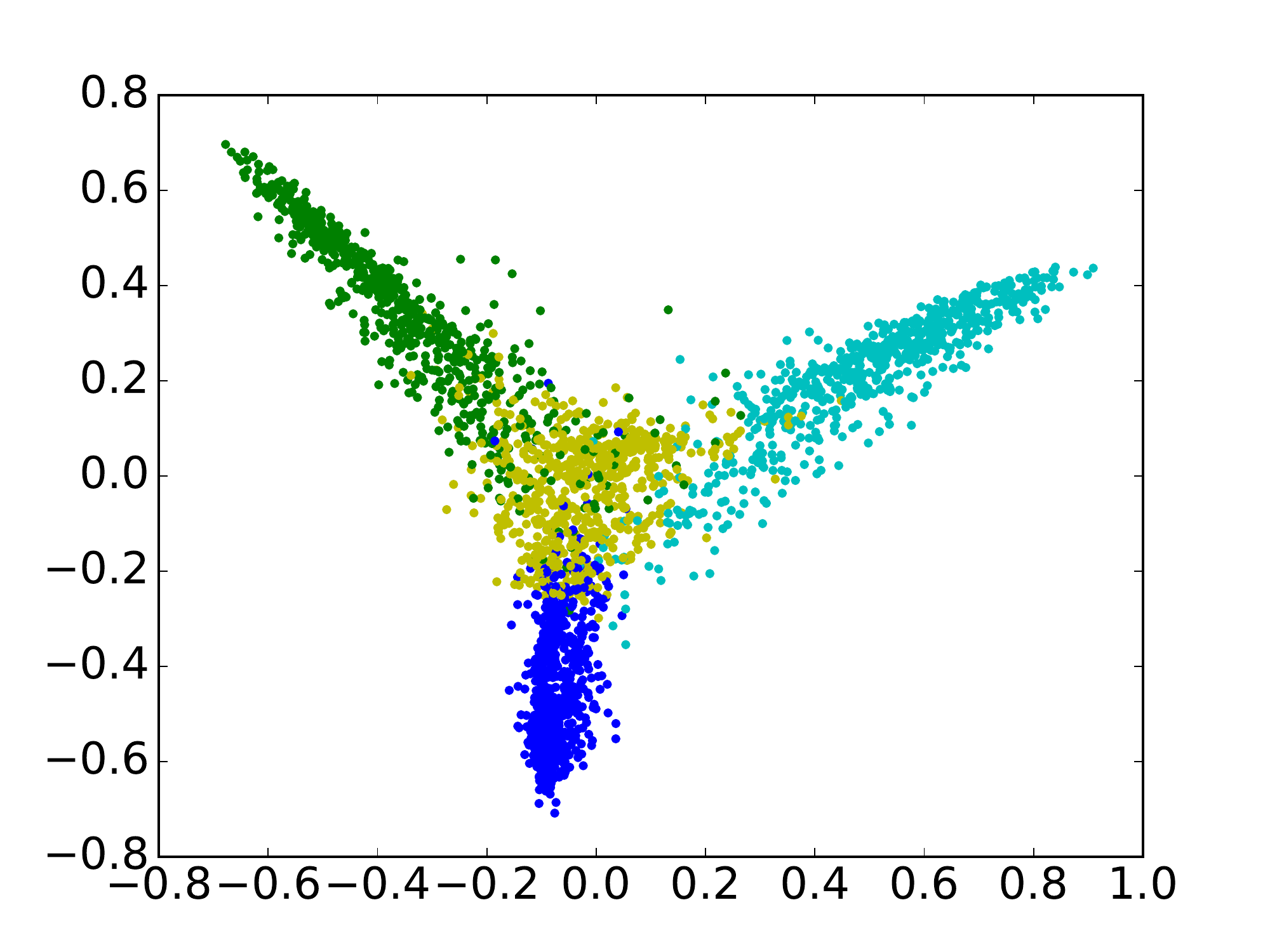} \label{fig:pcrp_digits14_n3000_true_clustering}}
	~
	\subfigure[CRP-Oracle when $N$=3000]{\includegraphics[width=0.228\textwidth]{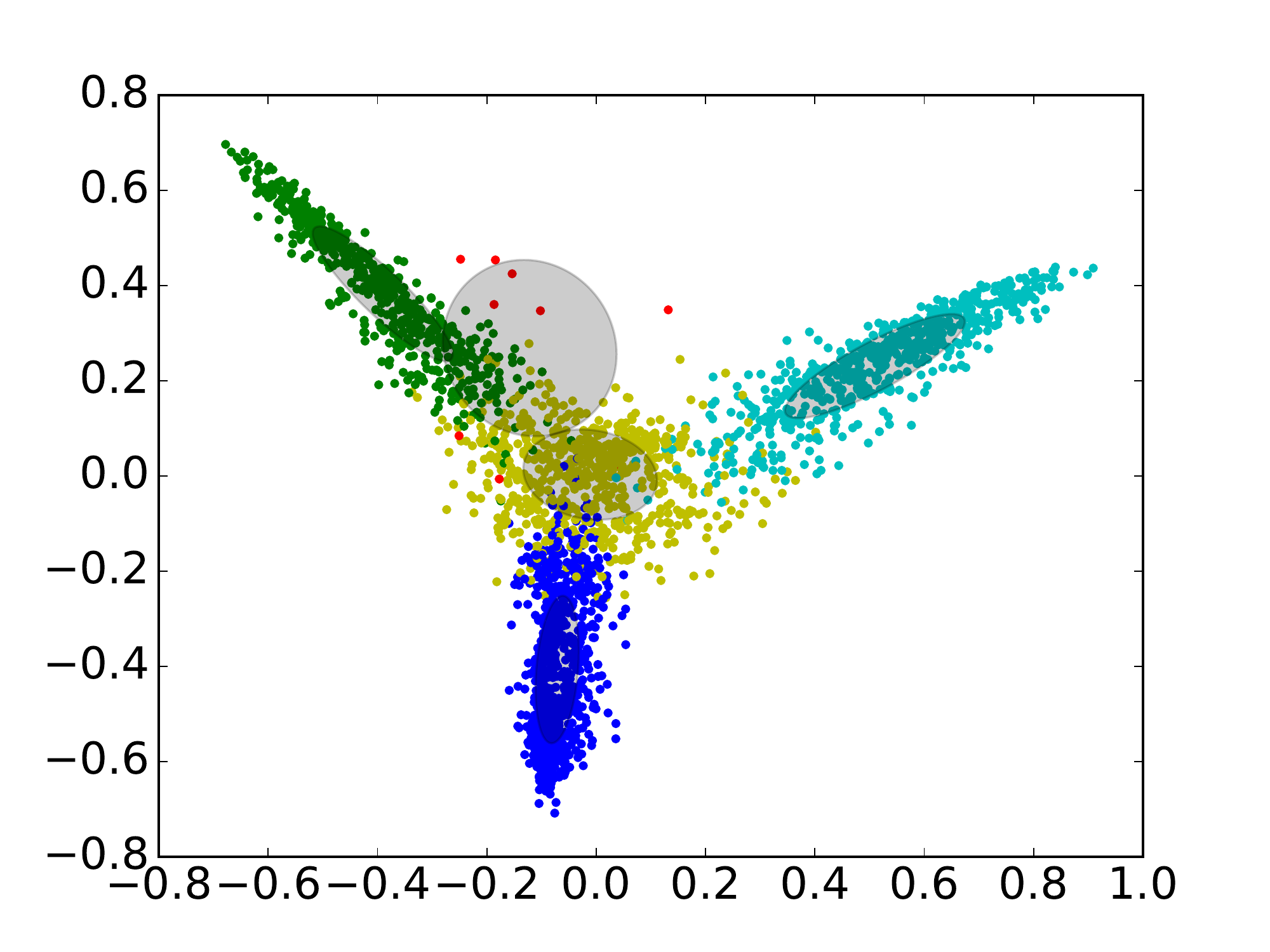} \label{fig:crp_digits14_n3000_crp_match}}
	~
	\subfigure[pCRP when $N$=3000]{\includegraphics[width=0.228\textwidth]{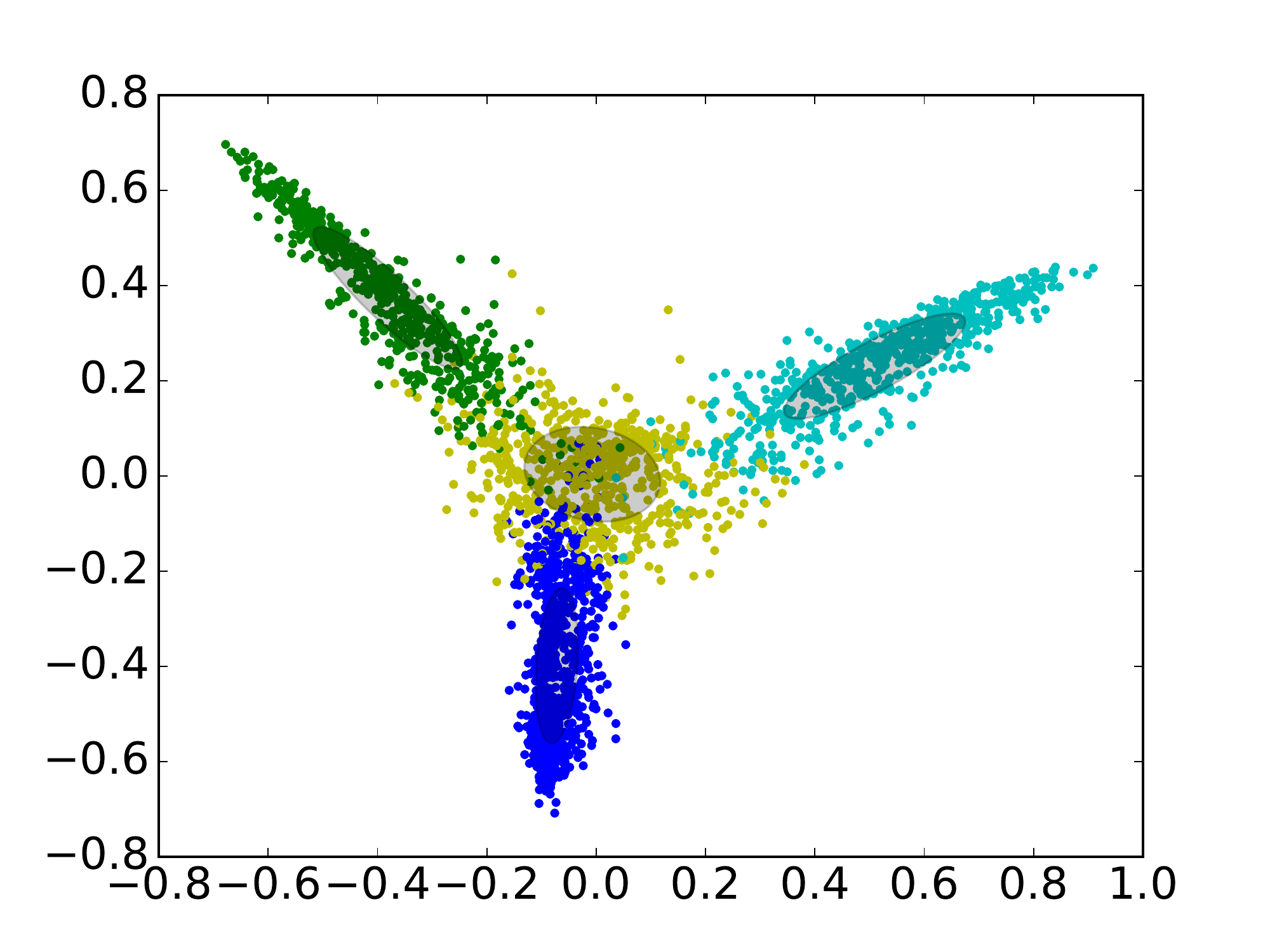} \label{fig:crp_digits14_n3000_pcrp1}}
	\center
	\caption{Results of clustering 3000 randomly sampled digits from 1 to 4 in spectral space. Observations in the same color represent the same digit. CRP-Oracle seems to over-fit the noise (the red cluster). We omit the result of CRP as it is similar to CRP-Oracle.}
	\label{fig:pcrp_digits14}
\end{figure}

\subsection{Digits 1-4}
In this experiment, we cluster 1000 and 3000 digits of the classes 1 to 4 in MNIST data set \cite{lecun2010mnist}, where the four clusters are approximately equally distributed. From cross validation on a different set of 1000 samples, we obtain the power value $r=1.05$. The concentration parameter $\alpha$ in CRP-Oracle is calculated as 0.58 ($N = 1000$) and 0.5 $(N = 3000)$. 

Figure \ref{fig:pcrp_digits14} shows the clustering result of all the three methods for $N=3000$. Both CRP and CRP-Oracle seem to over-fit the data by introducing a small cluster (in red), while pCRP gives a cleaner clustering result with four clusters. This comparison is further confirmed by Table~\ref{table:pcrp_digits_result}, where the average posterior cluster number in CRP apparently increases when $N$ grows to 3000. In contrast, pCRP is closer to the true situation by reducing the over-clustering effect, even compared to CRP-Oracle; see the columns of $K$ and $K_{\max}$. All methods lead to similar NMI but pCRP gives lower VI.

\begin{table*}[h!]
\centering
\begin{tabular}{|ccccc|}
\hline
\multicolumn{5}{|c|}{$N=1000$}                                                                    \\ \hline
Method               & NMI (SE) & VI (SE) & $K$ (SE) & $K_{\max}$   \\ \hline 
Ground truth         & 1.0                & 0                 & 4                  & -            \\
CRP-Oracle           & \textbf{0.651} (3.3$\times 10^{-4}$)        & \textbf{1.382} (1.4$\times 10^{-3}$)       & 4.37 (1.3$\times 10^{-2}$)         & 7            \\
CRP                  & \textbf{0.651} (3.3$\times 10^{-4}$)        & 1.386 (1.4$\times 10^{-3}$)       & 4.58 (1.6$\times 10^{-2}$)         & 8            \\
pCRP                 & \textbf{0.651} (3.3$\times 10^{-4}$)        & \textbf{1.382} (1.4$\times 10^{-3}$)       & \textbf{4.08} (0.6$\times 10^{-2}$)        & \textbf{6}            \\ \hline \hline
\multicolumn{5}{|c|}{$N=3000$}                                                                    \\ \hline
Method               & NMI (SE) & VI (SE) & $K$ (SE) & $K_{\max}$   \\ \hline
Ground truth         & 1.0                & 0.0               & 4                  & -            \\
CRP-Oracle           & 0.651 (2.0$\times 10^{-4}$)        & 1.400 (1.1$\times 10^{-3}$)       & 5.17 (1.2$\times 10^{-2}$)         & 8            \\
CRP                  & 0.651 (2.0$\times 10^{-4}$)        & 1.402 (1.1$\times 10^{-3}$)       & 5.44 (1.6$\times 10^{-2}$)         & 9            \\
pCRP                 & \textbf{0.652} (1.9$\times 10^{-4}$)        & \textbf{1.389} (1.1$\times 10^{-3}$)       & \textbf{4.57} (1.2$\times 10^{-2}$)         & \textbf{7}            \\ \hline
\end{tabular}
\caption{Comparison of CRP-Oracle, CRP and pCRP on a 4 digits subset of MNIST. $K$ is the average number of discovered clusters. $K_{\max}$ is the maximum number of clusters during sampling.}
\label{table:pcrp_digits_result}
\end{table*}

\begin{figure}[!h]
	\center
	\subfigure[The distribution of Old Faithful Geyser after standardization]{\includegraphics[width=0.32\textwidth]{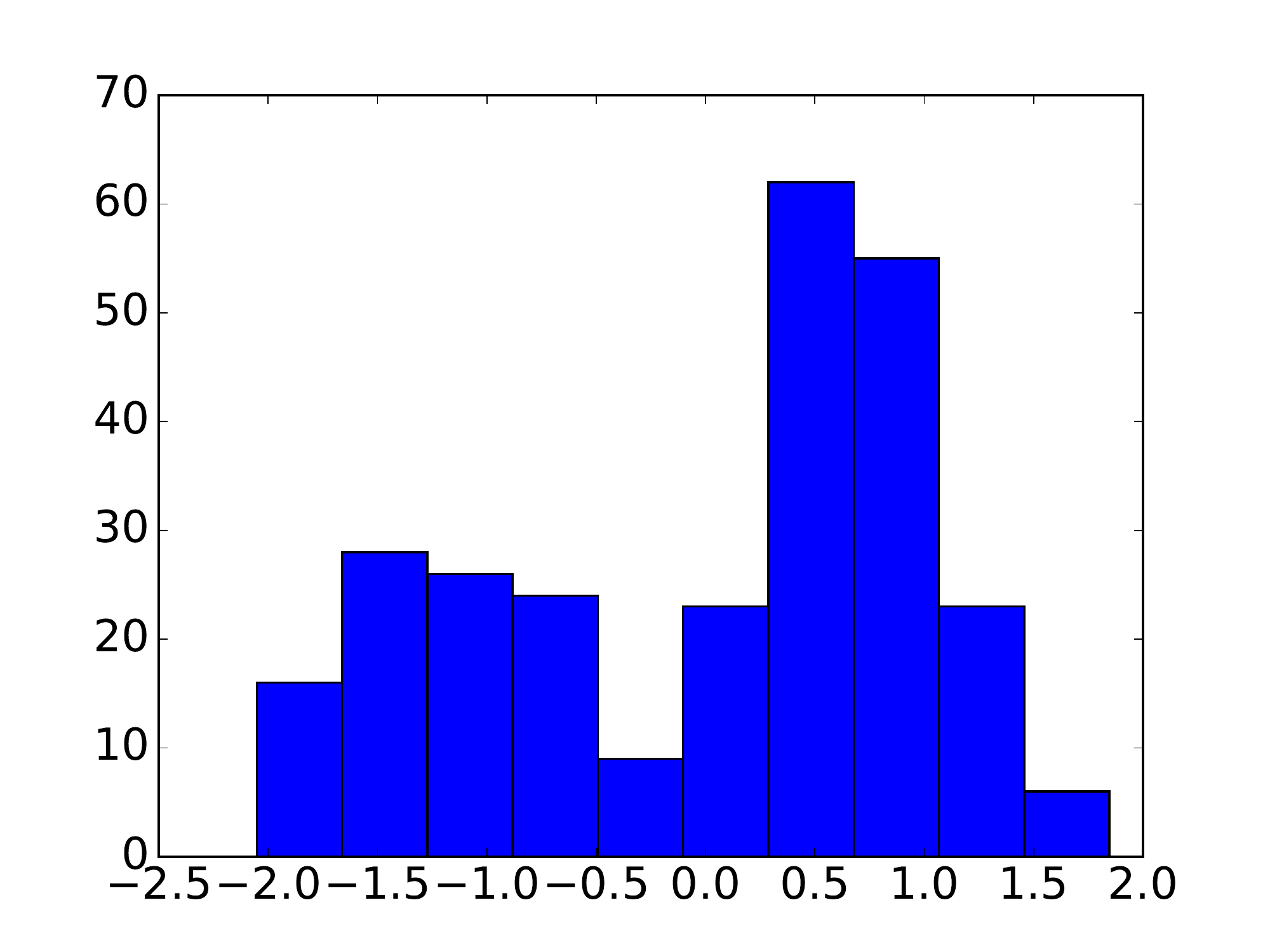} \label{fig:adhoc_oldfaithful_data_distribution}}
	~
	\subfigure[Clustering results using CRP-Oracle, CRP, pCRP and manual clustering.]{\includegraphics[width=0.32\textwidth]{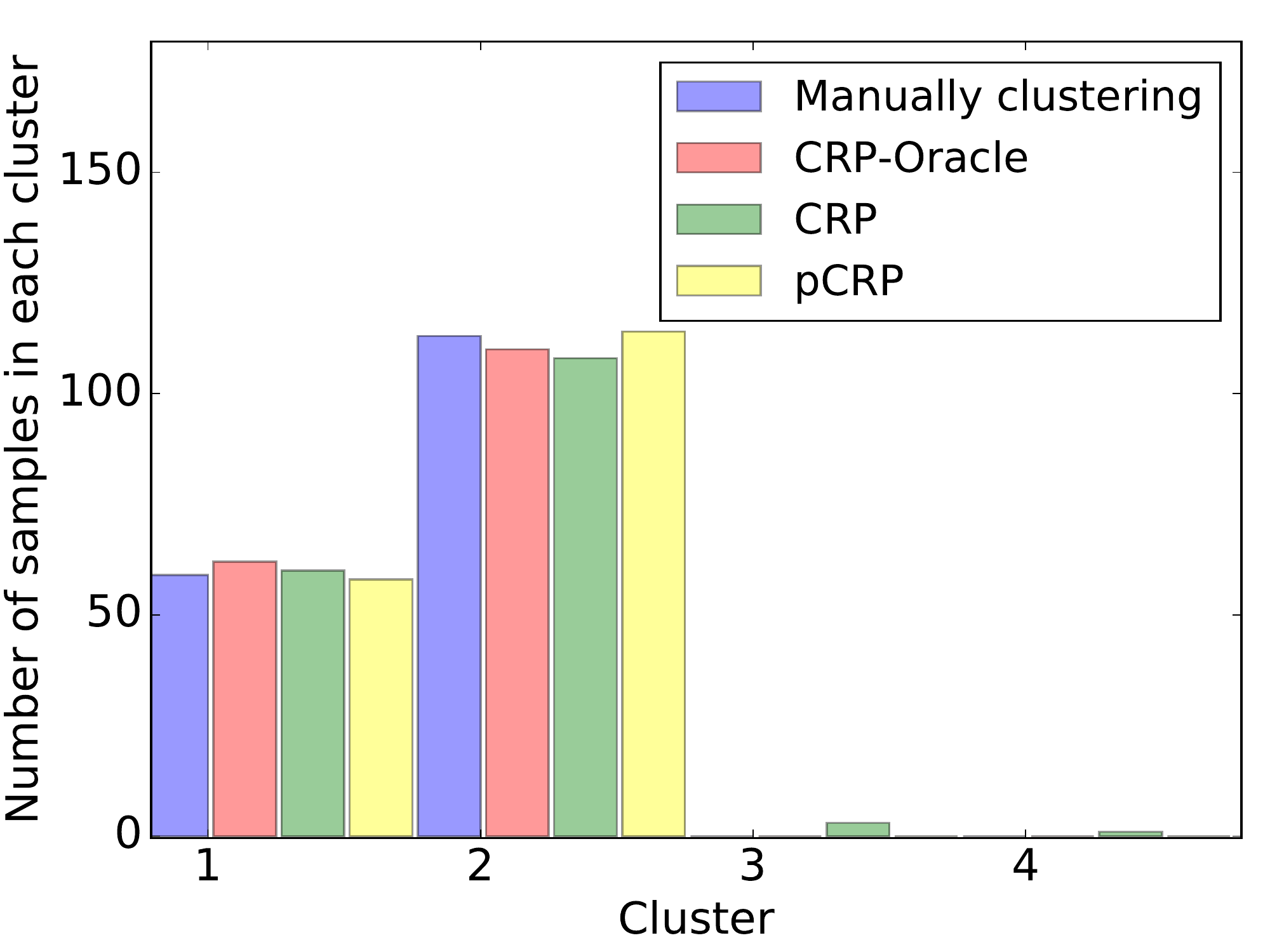} \label{fig:adhoc_oldfaithful_bar_comparison}}
	\center
	\caption{Clustering result for Old Faithful Geyser}
	\label{fig:adhoc_oldfaithful}
\end{figure}

\subsection{Old Faithful Geyser}
The Old Faithful Geyser data ($N=272$) are widely used to illustrate the performance of clustering algorithms. 
We use a test sample of 100 in CV leading to the power value $r=1.11$. We compare all methods on the other 172 data points. A manual clustering that consists of two Gaussian components is viewed as the ground truth. The concentration parameter is 0.39 in CRP-Oracle.  Figure~\ref{fig:adhoc_oldfaithful_bar_comparison} shows the size of each component obtained from all methods and the manual clustering. We can see that there are two mixture components in CRP-Oracle and pCRP, and four mixture components in the CRP method. In this case where the sample size is relatively small, we again see that pCRP successfully suppresses small components and generates more parsimonious results than CRP.




\pagebreak

\bibliography{bibliography}
\bibliographystyle{icml2018}

\end{document}